\theoremstyle{plain}
\newtheorem{theorem}{Theorem}[section]
\theoremstyle{definition}
\newtheorem{definition}[theorem]{Definition}
\theoremstyle{remark}
\icmltitlerunning{ProDiff: Prototype-Guided Diffusion for Minimal Information Trajectory Imputation}
\begin{document}

\twocolumn[
\icmltitle{ProDiff: Prototype-Guided Diffusion for Minimal Information Trajectory Imputation}
% It is OKAY to include author information, even for blind
% submissions: the style file will automatically remove it for you
% unless you've provided the [accepted] option to the icml2025
% package.

% List of affiliations: The first argument should be a (short)
% identifier you will use later to specify author affiliations
% Academic affiliations should list Department, University, City, Region, Country
% Industry affiliations should list Company, City, Region, Country

% You can specify symbols, otherwise they are numbered in order.
% Ideally, you should not use this facility. Affiliations will be numbered
% in order of appearance and this is the preferred way.Suoyi Tan, Feng Yao, Jingyuan Wang, Xin Lu
\icmlsetsymbol{equal}{*}

\begin{icmlauthorlist}
\icmlauthor{Tianci Bu}{equal,yyy}
\icmlauthor{Le Zhou}{equal,yyy}
\icmlauthor{Wenchuan Yang}{equal,yyy}
\icmlauthor{Jianhong Mou}{yyy}
\icmlauthor{Kang Yang}{yyy1}
\icmlauthor{Suoyi Tan}{yyy}
\icmlauthor{Feng Yao}{yyy}
%\icmlauthor{}{sch}
\icmlauthor{Jingyuan Wang}{yyy2,yyy3,yyy4}
\icmlauthor{Xin Lu}{yyy}
%\icmlauthor{}{sch}
%\icmlauthor{}{sch}
\end{icmlauthorlist}

\icmlaffiliation{yyy}{College of Systems Engineering, National University of Defense Technology, Changsha, China}
\icmlaffiliation{yyy2}{School of Computer Science and Engineering, Beihang University, Beijing, China}
\icmlaffiliation{yyy1}{School of Information, Renmin University of China, Beijing, China}
\icmlaffiliation{yyy3}{School of Economics and Management, Beihang University, Beijing 100191, China}
\icmlaffiliation{yyy4}{Engineering Research Center of Advanced Computer Application Technology, Ministry of Education}

\icmlcorrespondingauthor{Jingyuan Wang}{jywang@buaa.edu.cn}
\icmlcorrespondingauthor{Xin Lu}{xin.lu.lab@outlook.com}

% You may provide any keywords that you
% find helpful for describing your paper; these are used to populate
% the "keywords" metadata in the PDF but will not be shown in the document
\icmlkeywords{Machine Learning, ICML}

\vskip 0.3in
]

% this must go after the closing bracket ] following \twocolumn[ ...

% This command actually creates the footnote in the first column
% listing the affiliations and the copyright notice.
% The command takes one argument, which is text to display at the start of the footnote.
% The \icmlEqualContribution command is standard text for equal contribution.
% Remove it (just {}) if you do not need this facility.

\printAffiliationsAndNotice{\icmlEqualContribution}  % leave blank if no need to mention equal contribution
% \printAffiliationsAndNotice{\icmlEqualContribution} % otherwise use the standard text.

\begin{abstract}
% This document provides a basic paper template and submission guidelines.
% Abstracts must be a single paragraph, ideally between 4--6 sentences long.
% Gross violations will trigger corrections at the camera-ready phase.
Trajectory data is crucial for various applications but often suffers from incompleteness due to device limitations and diverse collection scenarios. Existing imputation methods rely on sparse trajectory or travel information, such as velocity, to infer missing points. However, these approaches assume that sparse trajectories retain essential behavioral patterns, which place significant demands on data acquisition and overlook the potential of large-scale human trajectory embeddings.
To address this, we propose ProDiff, a trajectory imputation framework that uses only two endpoints as minimal information. It integrates prototype learning to embed human movement patterns and a denoising diffusion probabilistic model for robust spatiotemporal reconstruction. Joint training with a tailored loss function ensures effective imputation.
ProDiff outperforms state-of-the-art methods, improving accuracy by 6.28\% on FourSquare and 2.52\% on WuXi. Further analysis shows a 0.927 correlation between generated and real trajectories, demonstrating the effectiveness of our approach.
\end{abstract}

\section{Introduction}
\label{submission}
% \textcolor{red}{
Mining spatio-temporal patterns from trajectory data has broad applications, such as infectious diseases control, human behavioral analysis, and urban planning \cite{jia2020population,  zhang2023heterogeneous, zheng2014urban, hettigeairphynet, ji2022precision}. Such data primarily originates from Location-Based Services (LBS) using cell tower signals \cite{lu2012predictability}, satellite-based systems such as GPS, GLONASS, BeiDou, QZSS, and Galileo, as well as IP-based location methods utilized by online platforms.
%}

\begin{figure}[t]
\centering
\includegraphics[width=1.0\linewidth]{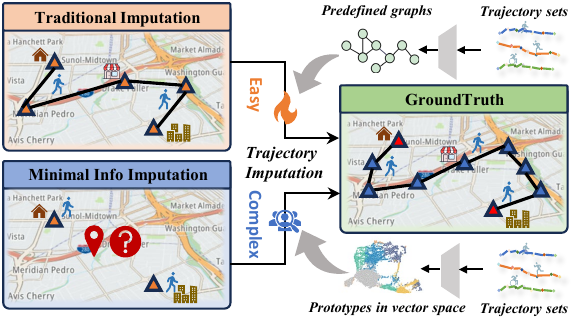}
\caption{ Comparison of traditional and proposed trajectory imputation. Traditional methods preserve movement patterns but impose device constraints and rely on predefined graphs. Our approach directly embeds trajectories into vector space for minimal information imputation. }
\label{Intro}
\end{figure}

% Trajectory data mining traditionally utilized statistical analysis \cite{wang2020big}, clustering \cite{bian2018survey}, and probabilistic modeling \cite{bashir2007object}. However, as data complexity increased, these methods faced limitations, prompting a shift towards more sophisticated techniques, such as those capable of addressing complex spatio-temporal relationships. Recent approaches excel at applications like trajectory forecasting \cite{bao2021bilstm, yao2017serm}, travel time estimation \cite{wang2016traffic, wang2018learning}, and anomaly detection \cite{liu2020online, han2022deeptea}. For instance, some methods \cite{che2018recurrent, feng2018deepmove, gao2019predicting} leverage recurrent structures to capture mobility features, while others \cite{li2018multi, huang2022context, shen2020ttpnet} integrate spatio-temporal embeddings with auxiliary data such as traffic or weather to enhance predictions.
% % by assessing the likelihood of test trajectories being generated from Gaussian components.

Most of the trajectory mining tasks \cite{bao2021bilstm, yao2017serm, wang2018learning} and methods\cite{li2018multi, huang2022context, shen2020ttpnet} are based on the assumption of complete and accurate trajectory data \cite{chen2024deep}, making them sensitive to the granularity and accuracy of sampled data. 
However, contemporary location data collection, reliant on mobile networks or satellite communications, is often hindered by base station coverage gaps, signal instability, and environmental interference, leading to frequent missing data. Traditional methods like linear interpolation \cite{blu2004linear} and vector autoregressive models \cite{lutkepohl2013vector} provide efficient solutions but often fail to capture the full data distribution. Deep learning-based imputation methods like \citet{wu2023timesnet, du2023saits, xia2021attnmove} capture spatial-temporal dependencies using self-attention mechanisms or convolutional neural network, while some methods based on graph neural networks \cite{chen2023rntrajrec, wei2024micro} rely on predefined graph structures to extract spatial-temporal features. Recently, generative methods such as generative adversarial networks (GANs) \cite{jiang2023continuous} and variational autoencoders (VAEs) \cite{chen2021trajvae} have shown promise in trajectory synthesis, and the emergence of denoising diffusion probabilistic models (DDPMs) \cite{ho2020denoising} has further advanced the field. For instance, DiffTraj \cite{10.5555/3666122.3668965} leverages diffusion model to capture group-level trajectories, generating synthetic trajectory data while preserving privacy.

Despite progress, existing trajectory imputation methods face notable limitations. First, they typically assume that sampled trajectories, despite large intervals, retain essential movement patterns, interpolating local points using global trajectories thereby imposing constraints on devices and operational environments. Second, they fail to fully embed the vast amount of unlabeled human trajectories, which exhibit consistent macro-level patterns that enable imputation under more relaxed conditions, as shown in Fig. \ref{Intro}. %\textcolor{red}{
Although recent work \cite{wei2024micro} has leveraged unlabeled trajectories to aid imputation, it required the graph structures as the foundamental element for prediction.
%}

%While prior work \cite{wei2024micro} has leveraged unlabeled trajectories to aid imputation, they need to construct graph structures and apply GNNs for feature extraction. In contrast, our method employs an unsupervised approach to learn multiple trajectory prototypes, representing them directly in vector space. Experimental results demonstrate that the underlying trajectory structures can be effectively captured through learning.

% 然而，当前大多数的轨迹插补方法存在两个问题，一方面，考虑的是轨迹稀疏但保留了基本模式的情况，这对现实情况的设备和环境添加了较为严格的要求，另一方面，整体人类行为模式对于轨迹插补任务的重要性被忽视了，虽然人类行为本身存在不确定性和多样性，但在宏观上表现出一致的模式，这为更宽松条件下的轨迹插补带来了可能。

%为了解决第一个问题，we propose a minimal trajectory imputation task, aimed at accurately reconstructing missing information to create usable datasets in scenarios with minimal information, as shown in Fig. \ref{Intro}. specifically, we model trajectories as sequences of points, Given a begin point (BP) and an end point (EP) within a fixed-length window, our objective is to generate all intermediate trajectory points, excluding BP and EP, aiming to refine coarse-grained trajectories. It is worth noting that the BP and EP here are dummy references, \ie any two points with a back-and-forth relationship can be considered as BP and EP, and in this way we can actually handle non-fixed-window-length trajectories using minimal information.

To address these limitations, we introduce ProDiff, a framework integrating \underline{Pro}totype learning with a denoising \underline{Diff}usion probabilistic model. ProDiff operates under minimal information constraints, modeling a trajectory as a sequence of points and interpolating missing locations using only two endpoints within a fixed-length window. This relaxes the prior assumption that sparse trajectories must retain essential movement patterns. ProDiff consists of two key components: (i) Diffusion-based generative model: The diffusion model reconstructs human movement by iteratively denoising from a latent space, offering reliable spatiotemporal modeling. (ii) Prototype-based condition extractor: This module learns prototypes that represent individual movement patterns, embedding diverse trajectories into a vector space through self-supervised learning. Given known trajectory information as queries, it extracts a comprehensive pattern representation to guide the diffusion model in generating realistic, individualized trajectories.
To effectively couple these two components, we design a joint training loss function that integrates generative and prototype learning objectives. This ensures a more compact embedding space while mitigating independent error accumulation and irreversible information loss typically introduced by multi-stage training. %\textcolor{red}{
Experimental results demonstrate the effectiveness of the proposed ProDiff and prove that the captured underlying trajectory structures can signficantly improve the imputation accuracy.
%}

In summary, the contributions of this work are as follows:
%\vspace{0.01cm}

% \setlist{nolistsep}
\begin{itemize}
    \item We relax the prior assumption that sparse trajectories inherently retain movement patterns and introduce trajectory imputation under minimal information constraints.
    \item We propose a prototype-based condition extractor that embeds human trajectories into a vector space, capturing macro-level behavioral patterns for the first time in trajectory imputation.
    \item We develop ProDiff, a framework that jointly optimizes generative modeling and prototype learning, effectively reconstructing missing trajectory data while reducing independent errors.
    \item We conduct extensive experiments on WuXi \cite{song2017recovering} and Foursquare \cite{yang2014modeling}, demonstrating superior imputation accuracy across different trajectory window sizes. Our code is available at 
    \url{https://github.com/b010001y/ProDiff}.
\end{itemize}

\section{Related Work} \label{sec:main:relw}

Please refer to \Cref{sec:appendix:relw} for an extensive discussion of related work. Here we provide its summary.

\textbf{Spatial-Temporal Sequence Imputation.} 
Traditional imputation methods evolved from simple statistical approaches like linear interpolation \cite{blu2004linear} to probabilistic frameworks such as PCA and Bayesian networks \cite{qu2009ppca,shi2013missing}, are fast but often too simple to capture complex distributions. Deep learning revolutionized the field through two paradigms: non-generative models like GRU-D \cite{che2018recurrent} with temporal decay mechanisms and SAITS \cite{du2023saits} using masked self-attention, and generative approaches where diffusion models like Diffusion-TS \cite{yuan2024diffusion} now dominate by disentangling trend-seasonality components.

% Deep learning-based methods now dominate this field, and they can be divided into non-generative and generative models. Non-generative methods, including GRU-D \cite{che2018recurrent} and TimesNet \cite{wu2023timesnet}, model temporal dependencies using RNNs and convolutions. Attention-based models like CDSA \cite{ma2019cdsa} and SAITS \cite{du2023saits} capture both short- and long-range dependencies. Generative models, such as VAEs \cite{doersch2016tutorial}, GANs \cite{goodfellow2020generative}, and diffusion models, have also been explored for their ability to fill missing data. Diffusion-TS \cite{yuan2024diffusion} is a recent example that applies diffusion models for imputation in time series data.

\textbf{Trajectory Data Mining.}
Trajectory analysis spans forecasting, estimation, and anomaly detection. CNN/RNN architectures \cite{bao2021bilstm,yang2017neural} pioneered point-wise prediction, while road-aware models like WDR \cite{wang2018learning} advanced travel time estimation through road network embeddings. Anomaly detection evolved from RNN-based classifiers \cite{song2018anomalous} to latent space methods like GM-VSAE \cite{liu2020online}. The emerging mobility generation field bridges forecasting and synthesis, exemplified by DiffTraj \cite{10.5555/3666122.3668965} applying raw GPS diffusion, yet lacks physics-aware trajectory topology preservation – a gap our work addresses.

% Anomaly detection techniques focus on identifying unusual patterns in trajectory data. Offline methods, such as ATD-RNN \cite{song2018anomalous}, apply RNNs, while online methods like GM-VSAE \cite{liu2020online} use VAEs for real-time anomaly detection. Recently, mobility generation tasks have become more common, with models like DiffTraj \cite{10.5555/3666122.3668965} leveraging diffusion techniques for trajectory generation. Our method merges trajectory forecasting and mobility generation, categorizing generative trajectory interpolation as mobility generation, and generalized trajectory interpolation as higher-order forecasting.

\begin{figure*}[ht]
\centering
\includegraphics[width=1.0\linewidth]{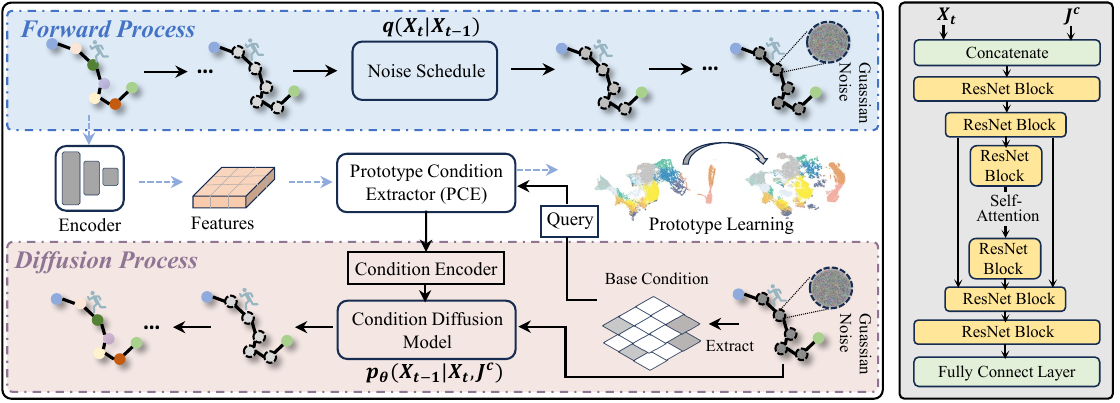}
\caption{Left illustrates how prototype learning and diffusion models interact. The diffusion process progressively corrupts trajectories with Gaussian noise, preserving only the endpoints, while prototype learning embeds trajectories and extracts patterns. During denoising, prototype-based conditions, combined with endpoint features, guide the diffusion model. A joint loss function optimizes both components, ensuring effective trajectory reconstruction. Right is the architecture of the diffusion base model.}
\label{frame}
\end{figure*}

\textbf{Mobility Data Synthesizing.}
Early synthesis relied on statistical approximations \cite{simini2021deep} until VAEs \cite{chen2021trajvae} and GANs \cite{jiang2023continuous} introduced deep generative modeling. Graph-based innovations like RNTrajRec \cite{chen2023rntrajrec} captured semantics through spatial-temporal transformers, while attention architectures \cite{xia2021attnmove} explicitly modeled cross-region dependencies. Modern diffusion frameworks such as ControlTraj \cite{zhu2024controltraj} enable conditional generation via traffic signal conditioning but remain resolution-rigid.

% Graph-based methods, such as RNTrajRec \cite{chen2023rntrajrec} and MM-STGED \cite{wei2024micro}, have been shown to effectively represent spatial-temporal dependencies by modeling trajectories as graphs. Attention-based models like AttnMove \cite{xia2021attnmove} explicitly capture these dependencies, improving performance in tasks such as missing data recovery. Denoising diffusion probabilistic models (DDPMs) \cite{ho2020denoising} have emerged as a powerful tool for generating high-quality synthetic data. DiffTraj \cite{10.5555/3666122.3668965} and ControlTraj \cite{zhu2024controltraj} leverage these models to produce high-fidelity trajectories, with the latter incorporating conditional signals for greater control over data generation.
\section{ProDiff Model}
% In this section, we will introduce the framework and learning procedure of Conditional ProDiff Model (CPDiff). It utilizes the prototype learning approach to construct patterns of human movement behavior and feeds them into the diffusion model as learnable conditions, providing a promising paradigm for solving the generative trajectory interpolation task.

\subsection{Problem Definition}

\begin{definition}
\label{def:st}
\textbf{Spatio-Temporal Trajectory.}  
A spatio-temporal trajectory is a sequence of human activity points, denoted as $\mathbf{x}_{i,j} \in \mathbb{R}^{n}$, where $n$ represents the number of attributes. Each point consists of \textit{time}, \textit{longitude}, and \textit{latitude},  \emph{i.e.}, $\mathbf{x}_{i,j} = \{t_{i,j}, lon_{i,j}, lat_{i,j}\}$, satisfying $t_{i,j} < t_{i, j+1}$. The trajectory of an individual $i$ is defined as $\mathbf{X}_i = [\mathbf{x}_{i,1}, ..., \mathbf{x}_{i,l}]$, where $l$ is the trajectory length.
\end{definition}

\begin{definition}
\label{def:tsw}
\textbf{Trajectory Sequence Window.}  
To process trajectories, we define a sliding window of size $k$ ($k < l$) that partitions a trajectory $\mathbf{X}_i$ into overlapping segments. Each segment is represented as $\mathbf{S}_p = [\mathbf{s}_{p,1}, ..., \mathbf{s}_{p,k}]$, yielding $l-k+1$ segments per trajectory. Given $M$ trajectories, the total number of segments for a fixed $k$ is $\sum_{i}^{M}(l_i - k + 1)$, where trajectories shorter than $k$ are discarded. 
\end{definition}

\begin{definition}
\label{def:mininfo}
\textbf{Minimal-Information Imputation.}  
Given a trajectory $\mathbf{X}_i = [\mathbf{x}_{i,1}, ..., \mathbf{x}_{i,l}]$, where each point $\mathbf{x}_{i,j} = \{t_{i,j}, lon_{i,j}, lat_{i,j}\}$ represents a spatio-temporal coordinate, we define the \textit{minimal-information imputation problem} as reconstructing $\mathbf{x}_{i,2}, ..., \mathbf{x}_{i,l-1}$ given only the endpoints $\mathbf{x}_{i,1}$ and $\mathbf{x}_{i,l}$. 
\end{definition}

\subsection{Base Network Components}
\textbf{Diffusion Base Model. }  
To capture spatiotemporal dependencies in trajectory imputation, we employ a 1D-UNet with residual network (ResNet) blocks. The 1D-UNet consists of down-sampling and up-sampling modules, linked by a self-attention layer. Each module encodes hidden features using group normalization, nonlinear activation, and 1D-CNN layers. The self-attention mechanism refines trajectory representations via:  
\begin{align}
    \text{Self-Attn}(Q_h,K_h,V_h) = \text{Softmax}\left(\frac{Q_hK_h^T}{\sqrt{d_h}}\right)V_h,
\end{align}
where $Q_h$, $K_h$, and $V_h$ are derived from hidden features $\mathbf{h}$. The features obtained through self-attention are then passed through the up-sampling module to output the predicted noise, as shown in the right of Fig. \ref{frame}. 

\begin{figure}[ht]
\centering
\includegraphics[width=1.0\linewidth]{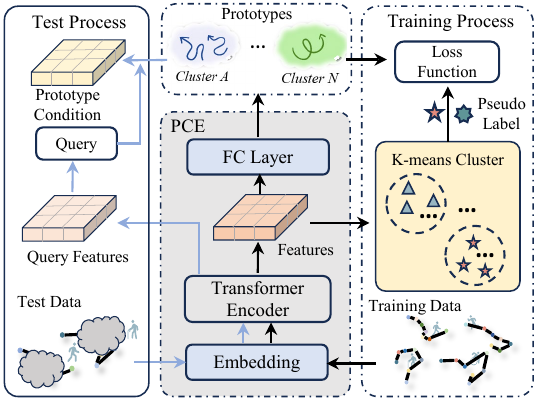}
\caption{Composition of prototype condition extractor and its workflow during the training and test (black and blue lines). }
\label{proto}
\end{figure}

\textbf{Base Condition.}  
Trajectory imputation relies on reconstructing intermediate points from the trajectory endpoints. Specifically, given a set of trajectory points $\mathbf{S}_i = [\mathbf{s}_{i, 1}, ... , \mathbf{s}_{i, k}] \in \mathbb{R}^{k \times d}$, where $k$ denotes the trajectory length, we generate a mask $\mathbf{M} = [\mathbf{m}_{1}, ... , \mathbf{m}_{k}] \in \mathbb{R}^{k}$ which is corresponding to $\mathbf{S}_i$. For any element $\mathbf{m}_{j}$:
\begin{align}
    \mathbf{m}_{j} = 
    \left\{\begin{aligned}
        1&,  \quad \text{if}\,j = 0\,\text{or}\,j = k, \\ 
        0&,  \quad \text{otherwise.}
    \end{aligned}\right.
    \label{eq:base condition}
\end{align}
This mask, when applied to trajectory points, encodes the locations to acquire the base condition $\mathbf{B}^c$ while guiding the diffusion model in reconstructing the missing points.  

\subsection{Prototype Condition Extractor}
\textbf{Embedding Trajectory Data.}  
To exploit large-scale unlabeled data, we introduce a Prototype Condition Extractor (PCE) that embeds trajectories into vector space and extracts latent movement patterns. For each trajectory $\mathbf{S}_i = [\mathbf{s}_{i,1}, ..., \mathbf{s}_{i,k}] \in \mathbb{R}^{k \times d}$ of window size $k$, the trajectory representation $\mathbf{H}_i$ is computed as:
\begin{align}
    \mathbf{H}_i = \sum_{j}^k\left(\text{Encoder}(\mathbf{s}_{i, j})\right).
    \label{eq:s}
\end{align}
Prototypes $\mathbf{P} \in \mathbb{R}^{N_p \times d_p}$ where $N_p$ denotes the number of prototypes and $d_p$ represents the embedded dimension are then generated via a fully connected layer:
\begin{align}
\mathbf{P} = \mathbf{W_p} \mathbf{H_p} + \mathbf{b_p},
\end{align}
where $\mathbf{W_p}$ and $\mathbf{b_p}$ are learnable parameters. Since the hidden feature $\mathbf{H_p}$ is considered as trajectory representation which is summed up from the features of all points in trajectory $\mathbf{S}$. Moreover, prototypes $\mathbf{P}$ are generated by $\mathbf{H_p}$ expressing the generic movement pattern of trajectory $\mathbf{S}$, which are iteratively refined and serve as conditioning features for inference.

%% condition encode and time embedding
\textbf{Conditioning the Diffusion Model.}  
While the base condition serves as a guide, it often provides implicit information, making it difficult for the model to derive sufficient insights directly. To enhance the diffusion model's guidance, we encode trajectory data into queries $\mathbf{Q}^b = \{Q_1, ..., Q_B\} \in \mathbb{R}^{B \times d}$ of $B$ trajectories and project them into the prototype space using:
\begin{align}
    \mathbf{D} &= \left[\text{Dis}(\mathbf{Q}_b, P_1), ..., \text{Dis}(\mathbf{Q}_b, P_{N_p})\right],  \\
    \mathbf{P}^c &= \mathbf{D}^{T} \mathbf{P}.
\end{align}
Here, $\mathbf{P}^c$ represents the prototype-conditioned feature, aligning trajectory embeddings with learned movement patterns. $\text{Dis}(\mathbf{Q}^b, P_i)$ can be an arbitrary distance function between query $\mathbf{Q}^b$ and $i^{th}$ prototype $P_i$. With encoder optimization, the prototypes representing movement patterns are refined, and the PCE can effectively enhance the diffusion model's guidance by matching the base condition with the prototype and generating a comprehensive prototype condition.

To integrate the base condition and prototype condition, we encode and combine them using a Wide \& Deep (WD) network, which contains two fully connected layers for each condition. Then the final joint condition $\mathcal{J}^c$ is formulated as:
\begin{align}
    \mathcal{J}^c = WD(\mathbf{B}^c) + WD(\mathbf{P}^c).
    \label{final condition}
\end{align}

Fig. \ref{proto} details the specific workflow of the prototype network. On the right and middle sections, complete trajectories are used to train the prototype network, enhancing the generation of prototypes that accurately represent movement patterns. This training is optimized through unsupervised contrastive loss and the joint loss function. On the left and middle sections, during testing, trajectories are encoded and used to query the trained prototypes to generate the prototype condition.

\subsection{Jointly Training Objective}
Given i.i.d. samples $\mathbf{Z} \sim p$, a diffusion probabilistic model approximates the data distribution by learning $p_{\theta}(\mathbf{Z})$. In the \textit{forward process}, Gaussian noise diffuses the data via the stochastic differential equation (SDE):
\begin{align}
    \mathrm{d}\mathbf{Z} = \mathbf{f}(\mathbf{Z}, t)dt + g(t)\mathrm{d}\mathbf{w},
\end{align}
where $\mathbf{f}(\cdot)$ is the drift coefficient, $g(\cdot)$ is the diffusion coefficient, and $\mathbf{w}$ is a standard Wiener process. The reverse process, conditioned on $\mathcal{J}^c$, is given by:
% \begin{small}
\begin{align}
    % \mathrm{d}\mathbf{Z} = \left[\mathbf{f}(\mathbf{Z}, t) - g(t)^2 \nabla_\mathbf{Z} \log p_t\left(\mathbf{Z} | \mathcal{J}^c\right)\right]dt + g(t)\mathrm{d}\bar{\mathbf{w}},
    \mathrm{d}\mathbf{Z}\! = \!\bigl[\mathbf{f}(\mathbf{Z}, t) \!- \!g(t)^2\nabla_\mathbf{Z} \log p_t\left(\mathbf{Z} | \mathcal{J}^c\right)\bigr]dt \!+ \!g(t)\mathrm{d}\bar{\mathbf{w}}.
\end{align}
% \end{small}
where $\nabla_\mathbf{Z} \log p_t(\mathbf{Z} | \mathcal{J}^c)$ is the conditional score function. The denoising network $\mathbf{\epsilon}_\theta$ estimates this score function:
\begin{align}
    \mathbf{\epsilon}_\theta(\mathbf{Z}_t, t, \mathcal{J}^c) \simeq - g(t)^2 \nabla_\mathbf{Z} \log p_t(\mathbf{Z} | \mathcal{J}^c),
\end{align}
where the joint condition $\mathcal{J}^c = f_\gamma(\mathbf{Z}_0)$. The joint loss function is:
\begin{align}
    \scalebox{0.93}{$\mathcal{L}_{J}(\theta, \gamma) \!= 
    \!\mathbb{E}_{t \sim \mathcal{U}} \mathbb{E}_{\mathbf{Z}_0 \sim p, \mathbf{\epsilon} \sim \mathcal{N}}
    \left[\lVert \mathbf{\epsilon} \!-\! \mathbf{\epsilon}_{\theta}(\mathbf{Z}_t, t, f_\gamma(\mathbf{Z}_0))\rVert^2\right]$}.
\end{align}
 $\theta$ and $\gamma$ are the optimized parameters of denoising network and joint condition extraction network.

 To enhance prototype learning for unsupervised trajectory data, we introduce additional loss functions to refine $f_\gamma$ and capture semantic movement patterns.
The first classification consistency loss, $\mathcal{L}_{C1}$, enforces alignment between K-means clustering and prototype-based learning. Given trajectory features $\mathbf{H_p} = \{\mathbf{H}_1, \mathbf{H}_2, ...\}$ and $N_c$ clusters, K-means assigns pseudo-labels $p_{kmeans}$, which guide prototype learning via:
\begin{align}
    \mathcal{L}_{C1}(\gamma) = -\sum_{i = 1}^{N_c} p_{kmeans}^i \log(q_{proto}^i),
\end{align}
where $q_{proto}^i$ represents the prototype-assigned label.
To ensure a compact, discriminative feature space, we employ a contrastive loss that optimizes prototype separation. Given trajectory features $\mathbf{H_p}$ and prototypes $\mathbf{P} = \{P_1, ..., P_{N_p}\}$, let $\mathbf{P}^+$ and $\mathbf{P}^-$ be the closest and farthest prototypes, respectively. The loss is defined as:
\begin{align}
    \scalebox{0.97}{$\mathcal{L}_{C2}(\gamma) \!=\! \mathbb{E} \bigl[\max\left(0, d(\mathbf{H}_i, \mathbf{P}^+) \!- \!d(\mathbf{H}_i, \mathbf{P}^-) \!+ \!m\right)\bigl]$},
\end{align}
where $m$ is a margin ensuring separation, and it $d(\cdot, \cdot)$ is a distance metric (e.g., Euclidean).
Afterward, the final objective integrates all loss functions:
\begin{align}
    \mathcal{L}(\theta, \gamma) = \lambda_1 \mathcal{L}_{J}(\theta, \gamma) + \lambda_2 \mathcal{L}_{C1}(\gamma) + \lambda_3 \mathcal{L}_{C2}(\gamma),
\end{align}
where $\lambda_1, \lambda_2, \lambda_3$ control the weight of each term. The full training process is outlined in \Cref{trainalg}.

% \Cref{sec:appendix:method} Algorithm 1.
% To our knowledge, this is the first application of prototype learning to guide diffusion models through joint optimization.

\subsection{Inference Processes}
In the Inference process, given information about two points in a trajectory with sequential order, the corresponding base condition can be generated according to Eq. \ref{eq:base condition}. The base condition is utilized as a query and projected into the space of trained robust prototypes to obtain the prototype condition, and finally the joint condition is obtained through Eq. \ref{final condition}. Then, the inference process conduct the trained denoising function $\mathcal{\mathbf{\epsilon}_\theta}$ to denoise from a standard Gaussian noise $\mathbf{Z_t}$ step by step. A more detailed algorithm can be found in \Cref{inferalg}.

\subsection{On the Prototype Loss for Trajectory Learning}
Prototype learning can be seen as the combination of clustering and contrastive learning. Formally, for data points $X=\{x_1,...,x_n\}$, the embedding function $f: \mathbb{R}^d \rightarrow \mathbb{R}^m$, and prototypes $\{p_1,...,p_K\}$ are optimized by,
$$
\min_{f,\{p_k\}}\sum_{i=1}^n\Vert f(x_i)-p_{y_i}\Vert ^2+\lambda\ell_{\mathrm{contrast}}(f(x_i),p_{y_i},\{p_k\}).
$$
In trajectory imputation scenario, we propose two basic assumptions which ensure the representiveness of macro-level human movement patterns: (1) Human trajectory data is drawn from a mixture of distributions, each localized on a manifold region $\mathcal{M}$ with mean $\mu_k$. (2) The embedding $f$ enables diverse prototypes that capture local tangents and reconstruct manifold structures via linear combinations \cite{roweis2000nonlinear}. With these assumptions, we have:
\begin{theorem}[The Optimality of Prototype Learning]
\label{thm:pl}
 Any global optimum $(f^*, \{p_k^*\})$ satisfies:
 \begin{enumerate}
    \item Prototypes approximate conditional expectations: $p_k^* \approx \mathbb{E}\left[f^*(x) | x \in C_k\right].$ 
    \item Contrastive loss enforces prototype separation, forming diverse directional vectors: $\langle p_i^*, p_j^* \rangle \leq \epsilon$, for $i\neq j$.
\end{enumerate}
\end{theorem}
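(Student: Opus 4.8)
The plan is to treat the two claims separately, since each corresponds to one term in the prototype objective. For the first claim, I would fix the embedding $f^*$ together with the induced cluster assignments $\{y_i\}$ and analyze the inner minimization over each prototype $p_k$. Writing $C_k = \{i : y_i = k\}$, the reconstruction term $\sum_{i \in C_k} \|f^*(x_i) - p_k\|^2$ is strictly convex in $p_k$ with Hessian $2|C_k| I$, so its first-order condition yields the empirical centroid $\hat{p}_k = |C_k|^{-1} \sum_{i \in C_k} f^*(x_i)$. Under Assumption~(1), the points in $C_k$ are drawn from a distribution localized near $\mu_k$, so by the law of large numbers this centroid converges to the conditional expectation $\mathbb{E}[f^*(x) \mid x \in C_k]$ as $|C_k| \to \infty$. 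The approximate equality in the statement then absorbs the finite-sample fluctuation together with the shift induced by the contrastive regularizer, which I handle next.

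The main quantitative step is to bound how the term $\lambda \ell_{\text{contrast}}$ displaces the optimum from the pure centroid. I would expand the full objective around $\hat{p}_k$; since the reconstruction Hessian is $2|C_k| I$, a standard implicit-function argument shows that the true optimum satisfies $\|p_k^* - \hat{p}_k\| = O(\lambda / |C_k|)$ whenever the contrastive gradient is bounded. This makes the symbol $\approx$ precise: the deviation is controlled by $\lambda$ and vanishes as $\lambda \to 0$ or $|C_k| \to \infty$. I expect this perturbation bound to be the principal obstacle, because it requires the contrastive loss to be Lipschitz in each prototype and, crucially, the assignments $\{y_i\}$ to remain stable under the perturbation; the manifold localization in Assumption~(1) is exactly what guarantees the clusters are separated enough for this stability to hold.

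For the second claim, I would combine the margin structure of $\mathcal{L}_{C2}$ with the outcome of the first claim. The loss rewards a large distance $d(\mathbf{H}_i, \mathbf{P}^-)$ to the farthest prototype while keeping $d(\mathbf{H}_i, \mathbf{P}^+)$ small; since Part~1 forces each embedded point $\mathbf{H}_i$ to concentrate near its assigned prototype $\mathbf{P}^+ = p_{y_i}^*$, pushing $\mathbf{P}^-$ away from $\mathbf{H}_i$ is equivalent, up to the $O(\lambda/|C_k|)$ error, to pushing it away from $\mathbf{P}^+$. Hence at any global optimum the hinge condition enforces $d(p_i^*, p_j^*) \geq m - o(1)$ for $i \neq j$. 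Assuming prototypes are normalized to approximately unit norm, I would convert distance into inner product via $\|p_i - p_j\|^2 = \|p_i\|^2 + \|p_j\|^2 - 2\langle p_i, p_j \rangle$, so the separation lower bound immediately yields $\langle p_i^*, p_j^* \rangle \leq \epsilon$ with $\epsilon = 1 - m^2/2$. Assumption~(2), which posits that $f^*$ supports diverse prototypes spanning distinct manifold tangents, guarantees that such a separated, non-degenerate configuration is actually realizable at the optimum rather than forced into collapse.
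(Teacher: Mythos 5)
Your treatment of Part 1 is sound and in fact more careful than the paper's: the paper simply invokes Pollard's consistency theorem to identify $p_k^*$ with the conditional expectation, whereas you make the centroid computation explicit and add a strong-convexity perturbation bound $\|p_k^* - \hat{p}_k\| = O(\lambda/|C_k|)$ that quantifies how the contrastive term displaces the optimum --- a coupling between the two loss terms that the paper silently ignores.

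The genuine gap is in Part 2. You instantiate $\ell_{\mathrm{contrast}}$ as the paper's hinge loss $\mathcal{L}_{C2}$, which compares each embedded point only against its \emph{closest} prototype $\mathbf{P}^+$ and its \emph{farthest} prototype $\mathbf{P}^-$. Driving that hinge to zero therefore yields $d(p_{y_i}^*, \mathbf{P}^-) \ge m - o(1)$ only for the single farthest prototype; it imposes no constraint at all on the intermediate prototypes, so your conclusion that $d(p_i^*, p_j^*) \ge m - o(1)$ for all $i \neq j$ does not follow. Concretely, two prototypes can coincide (or be arbitrarily close) while the hinge loss stays exactly zero, because a collapsed pair is never the farthest prototype from the points assigned to it, and neither the clustering term nor the hinge term penalizes the redundancy; such a configuration can be a global optimum, violating the all-pairs bound. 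Your appeal to Assumption (2) does not repair this: it argues a separated configuration is \emph{realizable}, but the theorem quantifies over \emph{every} global optimum, so you must exclude the collapsed ones, not merely exhibit a good one. The paper's proof avoids the issue by modeling the contrastive term as InfoNCE, whose optimality condition (cited from Saunshi et al.) provides a margin $f^*(x)^\top p_y^* - f^*(x)^\top p_k^* \ge \delta$ against \emph{all} $k \neq y$ simultaneously; averaging over $C_y$ and substituting Part 1 then gives $\langle p_y^*, p_k^* \rangle \le \|p_y^*\|^2 - \delta \le \epsilon$ for every pair (both proofs, yours and the paper's, additionally need the implicit normalization $\|p^*\| \lesssim 1$, so that step is not the problem). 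To salvage your route you would need either to replace the closest/farthest hinge with a loss summing over all negative prototypes, or to prove that every pair $(i,j)$ is realized as (assigned, farthest) for some subset of the data --- which Assumptions (1)--(2) alone do not give.
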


\begin{proof}
Using Pollard's consistency theorem \cite{pollard1981strong}, the empirical cluster centers converge to conditional expectations:
$$
p_k^* \approx \mathbb{E}\left[f^*(x) | x \in C_k\right].
$$
From InfoNCE-based contrastive loss \cite{saunshi2019theoretical}, optimality conditions ensure prototype distinctiveness:
$$
f^*(x)^\top p_y^* - f^*(x)^\top p_k^* \ge \delta, \quad \forall k\neq y.
$$

where $ \delta>0$ .Since the clustering term already guarantees that $p_y^* \approx \mathbb{E}\left[f^*(x) \mid x\in C_y\right]$, averaging over cluster $C_y$ gives:
$$
\langle p_y^*, p_y^* \rangle - \langle p_y^*, p_k^* \rangle \ge \delta.
$$
shifting the terms leads to the observation:
$$
\langle p_y^*, p_k^* \rangle \le \|p_y^*\|^2 - \delta \le \epsilon, k\neq y,
$$
which indicates that contrastive loss forces prototypes into globally distinct directions, ensuring effective representation of manifold local structures.
\end{proof}

\begin{table*}[htbp]
  \centering
  \caption{Comparison of model performance for different thresholds and different trajectory lengths on WuXi and FourSquare.}
  \label{tab:main}
  \resizebox{\textwidth}{!}{% 关键缩放命令
  \small % 缩小字号
  \begin{tabular}{clcccccccccc}
    \toprule
    \multirow{2}[4]{*}{\textbf{}} & \multirow{2}[4]{*}{\textbf{Method}} & \multicolumn{5}{c}{\textbf{WuXi}}     & \multicolumn{5}{c}{\textbf{FourSquare}} \\
\cmidrule(r){3-7}     \cmidrule(r){8-12}       &       & \textbf{TC@2k} & \textbf{TC@4k} & \textbf{TC@6k} & \textbf{TC@8k} & \textbf{TC@10k} & \textbf{TC@2k} & \textbf{TC@4k} & \textbf{TC@6k} & \textbf{TC@8k} & \textbf{TC@10k} \\
    \midrule
    \multirow{7}[2]{*}{{\rotatebox{90}{k=4}}} & VAR \cite{lutkepohl2013vector}   & 0.5194 & 0.5632 & 0.6050 & 0.6441 & 0.6811 & 0.5000 & 0.5000 & 0.5000 & 0.5000 & 0.5000 \\
          & SAITS \cite{du2023saits} & 0.5059 & 0.5224 & 0.5498 & 0.5861 & 0.6311 & 0.5000 & 0.5000 & 0.5000 & 0.5000 & 0.5000 \\
          & TimesNet \cite{wu2023timesnet}& 0.5080 & 0.5290 & 0.5593 & 0.5955 & 0.6352 & 0.5015 & 0.5054 & 0.5133 & 0.5258 & 0.5431 \\
          & Diff-TS \cite{yuan2024diffusion}&  0.5123  &  0.5462 &  0.5951  & 0.6496  &  0.7060  & 0.5268   &  0.5714    &  0.6173     &  0.6571     &  0.6932 \\
          & DiffTraj \cite{10.5555/3666122.3668965} &{\color{blue}0.6958}     & {\color{blue}0.8198}      &  {\color{blue}0.8816}     & {\color{blue}0.9169}      &  {\color{blue}0.9402}     &  0.5945      &  0.6845     & 0.7574      &  0.8189     &   0.8666 \\
          & Diff+Mask (Ours) & 0.6584 & 0.7731 & 0.8400 & 0.8834 & 0.9159 & {\color{blue}0.6541} & {\color{blue}0.7379} & {\color{blue}0.8010} & {\color{blue}0.8525} & {\color{blue}0.8928} \\
          & ProDiff (Ours) & {\color{red}\textbf{0.7155}}   &  {\color{red}\textbf{0.8414}}  & {\color{red}\textbf{0.9006}} &  {\color{red}\textbf{0.9326}}  & {\color{red}\textbf{0.9520}}  & {\color{red}\textbf{0.6644}}      &   {\color{red}\textbf{0.7452}}    &  {\color{red}\textbf{0.8087}}     &  {\color{red}\textbf{0.8596}}     & {\color{red}\textbf{0.8971}} \\
    \midrule
    \multirow{7}[2]{*}{{\rotatebox{90}{k=6}}} & VAR \cite{lutkepohl2013vector}   & 0.3360 & 0.3437 & 0.3556 & 0.3692 & 0.3840 & 0.3333 & 0.3333 & 0.3334 & 0.3334 & 0.3335 \\
          & SAITS \cite{du2023saits} & 0.3427 & 0.3762 & 0.4275 & 0.4880 & 0.5533 & 0.3333 & 0.3333 & 0.3333 & 0.3333 & 0.3333 \\
          & TimesNet \cite{wu2023timesnet} & 0.3419 & 0.3654 & 0.4029 & 0.4500 & 0.5044 & 0.3386 & 0.3530 & 0.3756 & 0.4039 & 0.4341 \\
          & Diff-TS \cite{yuan2024diffusion} &  0.3515  &  0.4011  & 0.4726 & 0.5491  & 0.6211   &  0.3761   &  0.4283  &  0.4827    &  0.5383   & 0.5874 \\
          & DiffTraj \cite{10.5555/3666122.3668965}& {\color{blue}0.5976}   &  {\color{blue}0.7476}     &  0.8227     & 0.8688   &  0.9005   & 0.4277    &      0.5404 &    0.6428    &   0.7314    &    0.8025    \\
          & Diff+Mask (Ours) & 0.5767 & 0.7324 & {\color{blue}0.8228} & {\color{blue}0.8802} & {\color{blue}0.9180} & {\color{blue}0.4859} & {\color{blue}0.5970} & {\color{blue}0.6902} & {\color{blue}0.7671} & {\color{blue}0.8265} \\
          & ProDiff (Ours)& {\color{red}\textbf{0.5978}}  &  {\color{red}\textbf{0.7686}}     &  {\color{red}\textbf{0.8518}}     &   {\color{red}\textbf{0.8992}}   &   {\color{red}\textbf{0.9285}}    & {\color{red}\textbf{0.5005}}      &   {\color{red}\textbf{0.6093}}    &  {\color{red}\textbf{0.7013}}     &  {\color{red}\textbf{0.7772}}     &  {\color{red}\textbf{0.8345}} \\
    \midrule
    \multirow{7}[2]{*}{{\rotatebox{90}{k=8}}} & VAR \cite{lutkepohl2013vector}   & 0.2537 & 0.2627 & 0.2739 & 0.2861 & 0.2986 & 0.2500 & 0.2500 & 0.2500 & 0.2500 & 0.2500 \\
          & SAITS \cite{du2023saits}& 0.2572 & 0.2764 & 0.3059 & 0.3485 & 0.3976 & 0.2500 & 0.2502 & 0.2505 & 0.2509 & 0.2513 \\
          & TimesNet \cite{wu2023timesnet}& 0.2520 & 0.2574 & 0.2663 & 0.2785 & 0.2942 & 0.2516 & 0.2563 & 0.2634 & 0.2715 & 0.2808 \\
          & Diff-TS \cite{yuan2024diffusion}&  0.2689 & 0.3199  &0.3907   & 0.4676  &0.5453      &  0.3233     &    0.3932   &  0.4611    &  0.5358   & 0.5964 \\
          & DiffTraj \cite{10.5555/3666122.3668965}& {\color{blue}0.5418}    &  {\color{blue}0.7009}     &   {\color{blue}0.7868}    &     {\color{blue}0.8414}   &  {\color{blue}0.8795}     & 0.3316     &   0.4526    &   0.5671    &  0.6688     & 0.7520     \\
          & Diff+Mask (Ours)& 0.4486 & 0.5946 & 0.6943 & 0.7631 & 0.8107 & {\color{blue}0.3957} & {\color{blue}0.5300} & {\color{blue}0.6431} & {\color{blue}0.7350} & {\color{blue}0.8045} \\
          & ProDiff (Ours)&  {\color{red}\textbf{0.5752}}  &  {\color{red}\textbf{0.7501}}     & {\color{red}\textbf{ 0.8236}}     &  {\color{red}\textbf{0.8663}}     &{\color{red}\textbf{ 0.8945}}      &  {\color{red}\textbf{0.4000}}    &  {\color{red}\textbf{0.5331}}     & {\color{red}\textbf{0.6474}}      &{\color{red}\textbf{0.7404}}       & {\color{red}\textbf{0.8090}}  \\
    \midrule
    \multirow{7}[2]{*}{{\rotatebox{90}{k=10}}} & VAR \cite{lutkepohl2013vector}   & 0.2012 & 0.2047 & 0.2102 & 0.2177 & 0.2270 & 0.2000 & 0.2000 & 0.2000 & 0.2000 & 0.2000 \\
          & SAITS \cite{du2023saits} & 0.2080 & 0.2316 & 0.2686 & 0.3158 & 0.3692 & 0.2000 & 0.2000 & 0.2000 & 0.2000 & 0.2000 \\
          & TimesNet \cite{wu2023timesnet}& 0.2073 & 0.2275 & 0.2591 & 0.3035 & 0.3559 & 0.2003 & 0.2013 & 0.2034 & 0.2064 & 0.2110 \\
          & Diff-TS \cite{yuan2024diffusion}&  0.2173 & 0.2655  & 0.3367  & 0.4190  &  0.5000     &  0.2751     & 0.3484   & 0.4207   & 0.4990   & 0.5646 \\
          & DiffTraj \cite{10.5555/3666122.3668965}&  {\color{blue}0.4994}     & {\color{blue}0.6687}    &  {\color{blue}0.7640}     &   {\color{blue}0.8259}    &  {\color{blue}0.8692}     &  0.2762     &  0.4024     & 0.5300      &  0.6453     & 0.7386 \\
          & Diff+Mask (Ours)& 0.3793 & 0.5104 & 0.6046 & 0.6773 & 0.7344 & {\color{blue}0.3412}  & {\color{blue}0.4800} & {\color{blue}0.5999}  & {\color{blue}0.7023}  &  {\color{blue}0.7868} \\
          & ProDiff (Ours)& {\color{red}\textbf{0.4996}}      & {\color{red}\textbf{0.6994}}         & {\color{red}\textbf{0.8048}}      &  {\color{red}\textbf{0.8667}}     &  {\color{red}\textbf{0.9053}}     &  {\color{red}\textbf{0.3522}}     &   {\color{red}\textbf{0.4910}}    &   {\color{red}\textbf{0.6105}}      &   {\color{red}\textbf{0.7146}}    &  {\color{red}\textbf{0.7920}} \\
    \bottomrule
  \end{tabular}%
  }
\end{table*}

\section{Experiments}
\subsection{Datasets}
Our experiments utilize two well-established trajectory datasets:  
(1) WuXi: Extracted from mobile signal data \cite{song2017recovering}, covering WuXi, China, over six months (Oct 2013–Mar 2014). It records locations whenever users’ phones are active. For efficiency, we use a 10-day subset, concatenating individual trajectories.  
(2) Foursquare: A public dataset \cite{yang2014modeling} containing check-ins over 10 months (Apr 2012–Feb 2013) in New York and Tokyo. Each check-in includes a timestamp, GPS coordinates, and semantic tags.  
All datasets are anonymized, ensuring no privacy concerns. Tab. \ref{tab:hm} in \cref{sec:appendix:experiment} provides details. 

\subsection{Evaluation Metric and Baseline}
% 我们将轨迹插值实验分为两个部分，第一部分是与时间序列相关方法对比，另一部分是与轨迹相关方法对比。因此这里的指标和baseline同样分为两个部分。
% 我们选择了一些传统序列插值的方法如Vector Autoregression (VAR) \cite{lutkepohl2005new}，以及the current state-of-the-art spatio-temporal sequence methods based on diffusion model, 包括SAITS \cite{du2023saits}, TimesNet \cite{wu2023timesnet}, Diffusion-TS \cite{yuan2024diffusion}， 和DiffTraj \cite{10.5555/3666122.3668965}，同时为了评估性能，我们提出了TC这个指标

% 在轨迹相关方法中，我们与最近的sota方法进行了比较，包括RNTrajRec \cite{chen2023rntrajrec}, TS-TrajGen \cite{jiang2023continuous}, MM-STGED \cite{wei2024micro}, and AttnMove \cite{xia2021attnmove}，To ensure fair comparisons, specific modules in these baselines were removed to avoid reliance on unavailable additional information from our datasets. 在评估性能上，我们选取了轨迹生成中常见衡量空间分布的指标，包括Density，Distance，Segment Distance，Radius，MAE，RMSE。

% 关于评测方法和baseline的更多细节可以查看附录C
We evaluate trajectory imputation by comparing against (i) time-series interpolation methods and (ii) trajectory-specific approaches, with corresponding evaluation metrics.

For time-series interpolation, we benchmark against classical methods like Vector Autoregression (VAR) \cite{lutkepohl2005new} and state-of-the-art spatio-temporal models, including SAITS \cite{du2023saits}, TimesNet \cite{wu2023timesnet}, Diffusion-TS \cite{yuan2024diffusion}, and DiffTraj \cite{10.5555/3666122.3668965}. To measure imputation accuracy, we introduce trajectory coverage ($TC@\tau$), which quantifies the proportion of generated points $\hat{\mathbf{s}_{i, j}}$ within a threshold $\tau$ from the ground truth ${\mathbf{s}_{i, j}}$:
\begin{align}
    TC@\tau = \frac{1}{k}\sum_{j=1}^{k} \mathbb{I}\left(d(\hat{\mathbf{s}_{i, j}}, \mathbf{s}_{i, j}) < \tau\right),
\end{align}
where $\hat{\mathbf{s}_{i, j}}$ is the generated points, and $\mathbb{I}(\cdot)$ is an indicator function that equals when $d(\hat{\mathbf{s}_{i, j}}, \mathbf{s}_{i, j})$ is less than the threshold $\tau$ and $0$ otherwise.

For trajectory-specific baselines, we compare against RNTrajRec \cite{chen2023rntrajrec}, TS-TrajGen \cite{jiang2023continuous}, MM-STGED \cite{wei2024micro}, AttnMove \cite{xia2021attnmove}, and DiffTraj \cite{10.5555/3666122.3668965}. To ensure fairness, we remove modules reliant on unavailable auxiliary information. Performance is assessed using standard trajectory generation metrics, including Density, Distance, Segment Distance, Radius, MAE, and RMSE. Further details on evaluation protocols and baselines are provided in \Cref{sec:appendix:experiment}.

\subsection{Implementation Details}

Our experiments balance the effectiveness of each module in the joint training; we set $\lambda_{1}, \lambda_{2}, \lambda_{3}$ to 1. Gradient updates were facilitated using the Adam optimizer, initialized with a learning rate of $2e^{-4}$.
%  \cite{paszke2019pytorch}
We summarize the hyperparameter settings for the diffusion model and PCE in Tab. \ref{tab:hp_2}.

\begin{table}[htbp]
  \centering
  \caption{General setting of ProDiff model}
    \resizebox{\linewidth}{!}{
    \begin{tabular}{cccc}
    \toprule
    \multicolumn{2}{c}{\textbf{Diffusion}}     & \multicolumn{2}{c}{\textbf{PCE}} \\
    \cmidrule(r){1-2}     \cmidrule(r){3-4} 
    \multicolumn{1}{c}{Parameter} & \multicolumn{1}{c}{Setting value} & \multicolumn{1}{c}{Parameter} & \multicolumn{1}{c}{Setting value}\\
    \midrule
    Diffusion Steps  & 500  & Prototypes & 20 \\
    Embedding Dim & 128 & Embedding Dim & 512 \\
    $\beta$ (linear schedule) & 0.0001$\sim$ 0.05  & Heads & 8\\
    ResNet Blocks & 2  & Encoder Blocks & 4\\
    Sampling Blocks & 4  & Forward Dim & 256\\
    Input Length & 3 $\sim$ 10  & Dropout ratio & 0.1 \\
    \bottomrule
    \end{tabular}%
    }
  \label{tab:hp_2}%
\end{table}%

\begin{figure*}[ht]
\centering
\includegraphics[width=1.0\linewidth]{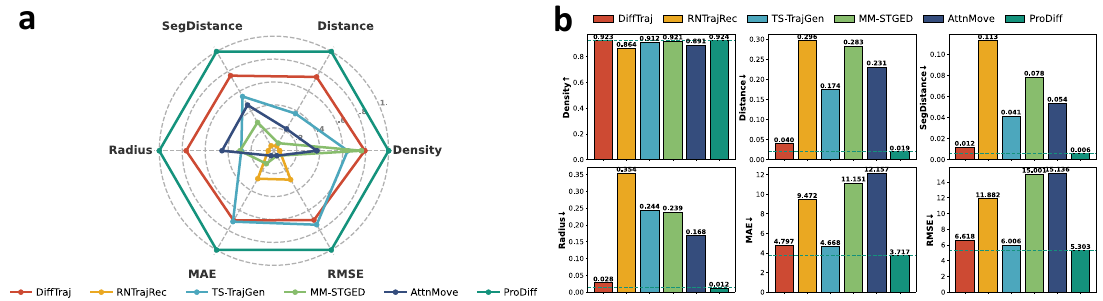}
\caption{\textbf{a}. Radar charts illustrate the normalized performance of different models across six distinct metrics. \textbf{b}. Histogram comparing the performance of each model across different metrics, with dashed lines indicating the best-performing model's values for each metric.}
\label{ext}
\end{figure*}

\subsection{Main Results}
The trajectory coverage across different baselines and window sizes is presented in Tab. \ref{tab:main}, where “TC@2k” represents the percentage of generated values within 2km of the true location, with TC@4k–10k extending up to 10km. The highest and second-highest values are marked in red and blue, respectively. We evaluate performance separately for time-series interpolation methods and trajectory-specific approaches.

\textbf{(1) Comparison with Time-Series Interpolation Methods.}  
ProDiff consistently outperforms sequence imputation models across datasets. On the WuXi dataset with $k=4$, ProDiff achieves 71.55\% at TC@2k, exceeding all baselines ($<$70\%). As the threshold increases (TC@4k–10k), ProDiff maintains high accuracy (84.14\%–95.20\%), with its advantage over the second-best method expanding from 1.18\% at TC@2k to 3.61\% at TC@10k. 
% Notably, diffusion-based models exhibit strong performance, confirming their ability to capture complex spatio-temporal dependencies. 
Furthermore, ProDiff demonstrates robustness across datasets and segment sizes, where other models, such as DiffTraj, suffer sharp declines at larger thresholds (TC@6k–10k). On the FourSquare dataset, DiffTraj’s TC@2k score for $k=8$ drops by 21.02\%, while ProDiff only decreases by 11.59\%. Additionally, while DiffTraj loses its second-place ranking to Diff+Mask, ProDiff retains its lead, indicating its ability to learn stable movement patterns via the prototype condition extractor.

\textbf{(2) Comparison with Trajectory-Specific Methods.}  
Fig. \ref{ext} further validates ProDiff’s superiority among trajectory models. Panel (a) presents normalized scores across all metrics, while panel (b) details model-specific performance. ProDiff consistently sets the benchmark across six additional metrics. While density scores are similar among models, ProDiff exhibits a substantial lead in spatial distribution metrics (e.g., Distance, Segment Distance, Radius, MAE, RMSE), highlighting its effectiveness in diverse conditions.

\subsection{Ablation Study}
We conducted three ablation experiments on the WuXi dataset to validate the contributions of our key components. 
%\textcolor{red}{
We also investigate the accerlaration of the proposed ProDiff, and the results are provided in \Cref{sec:appendix:experiment}.
%}

\textbf{Effect of Prototype Condition Extractor.}  
To assess the impact of individual modules, we removed the prototype condition extractor (PCE), cross-entropy loss ($\mathcal{L}_{C1}$), and contrastive loss ($\mathcal{L}_{C2}$) while keeping the joint loss intact. As shown in Tab. \ref{tab:ab1}, PCE consistently improves performance, with $\mathcal{L}_{C1}$ and $\mathcal{L}_{C2}$ further enhancing its effectiveness in capturing movement patterns. Notably, the performance gain of PCE is more pronounced at longer distances when $k=8$, suggesting that its effectiveness increases with extended trajectory segments.

% Table generated by Excel2LaTeX from sheet 'Sheet1'
\begin{table}[htbp]
  \centering
  \caption{Performance comparison of removing different modules.}
  \resizebox{\linewidth}{!}{
    \begin{tabular}{clrrrrr}
    \toprule
    \multicolumn{1}{l}{} & \multicolumn{1}{l}{Method} & \multicolumn{1}{c}{TC@2k} & \multicolumn{1}{c}{TC@4k} & \multicolumn{1}{c}{TC@6k} & \multicolumn{1}{c}{TC@8k} & \multicolumn{1}{c}{TC@10k} \\
    \midrule
    \multirow{4}[2]{*}{{\rotatebox{90}{k=6}}} &
    ProDiff & \textbf{0.5978}  &  \textbf{0.7686}     &  \textbf{0.8518}     &   \textbf{0.8992}   &   \textbf{0.9285}  \\
    & w.o. Pro & 0.5767 & 0.7324 & 0.8228 & 0.8802 & 0.9180 \\
    % w.o. $\mathcal{L}_{J}$ &     0.3333  &   0.3333      &     0.3333    &     0.3333    &   0.3333 \\
    & w.o. $\mathcal{L}_{C1}$ &   0.5939    & 0.7556      & 0.8371      &  0.8867     &  0.9195 \\
    & w.o. $\mathcal{L}_{C2}$ &   0.5952    &  0.7560     &  0.8374     &  0.8869     &  0.9199 \\
        \midrule
    \multirow{4}[2]{*}{{\rotatebox{90}{k=8}}} &
    ProDiff & \textbf{0.5752}  &  \textbf{0.7501}     &  \textbf{0.8236}     &   \textbf{0.8663}   &   \textbf{0.8945}  \\
    & w.o. Pro& 0.4486 & 0.5946 & 0.6943 & 0.7631 & 0.8107 \\
    % w.o. $\mathcal{L}_{J}$ &     0.3333  &   0.3333      &     0.3333    &     0.3333    &   0.3333 \\
    & w.o. $\mathcal{L}_{C1}$ &  0.5395    &  0.7205   &  0.7966   & 0.8399    & 0.8691 \\
    & w.o. $\mathcal{L}_{C2}$ & 0.4888    &  0.6638    &  0.7473    & 0.7984     & 0.8340 \\
    \bottomrule
    \end{tabular}%
  }
  \label{tab:ab1}%
\end{table}%

% Table generated by Excel2LaTeX from sheet 'Sheet1'
\begin{table}[htbp]
  \centering
  \caption{Performance comparison for cVAE and cGAN.}
    \resizebox{\linewidth}{!}{
    \begin{tabular}{lrrrrr}
    \toprule
    \multicolumn{1}{c}{Method} & \multicolumn{1}{c}{TC@2k} & \multicolumn{1}{c}{TC@4k} & \multicolumn{1}{c}{TC@6k} & \multicolumn{1}{c}{TC@8k} & \multicolumn{1}{c}{TC@10k} \\
    \midrule
    cVAE+MASK  & 0.2616      &  0.2936     & 0.3385      &  0.3926     &  0.4513 \\
    cVAE+pro & 0.3416    &  0.3685     &  0.4082     & 0.4540      & 0.5009  \\
    cGAN+MASK  & 0.2760   & 0.3240   &  0.3742  &  0.4285  & 0.4896  \\
    cGAN+pro & 0.3074   &  0.3997     & 0.4746      & 0.5361      & 0.5883 \\
    \bottomrule
    \end{tabular}%
    }
  \label{tab:ab2}%
\end{table}%

\textbf{Generalization Across Generative Models.}  
To evaluate PCE's generalizability, we integrated it with cVAE and cGAN, applying both MASK and PCE to these models (Tab. \ref{tab:ab2}). At the 10k threshold, adding only MASK yields 45.13\% for cVAE and 48.96\% for cGAN, whereas incorporating PCE improves accuracy to 50.09\% and 58.83\%, respectively. This highlights PCE’s ability to enhance movement pattern learning across different generative frameworks.

\begin{figure*}[ht]
	\centering
	\includegraphics[width=0.95\linewidth]{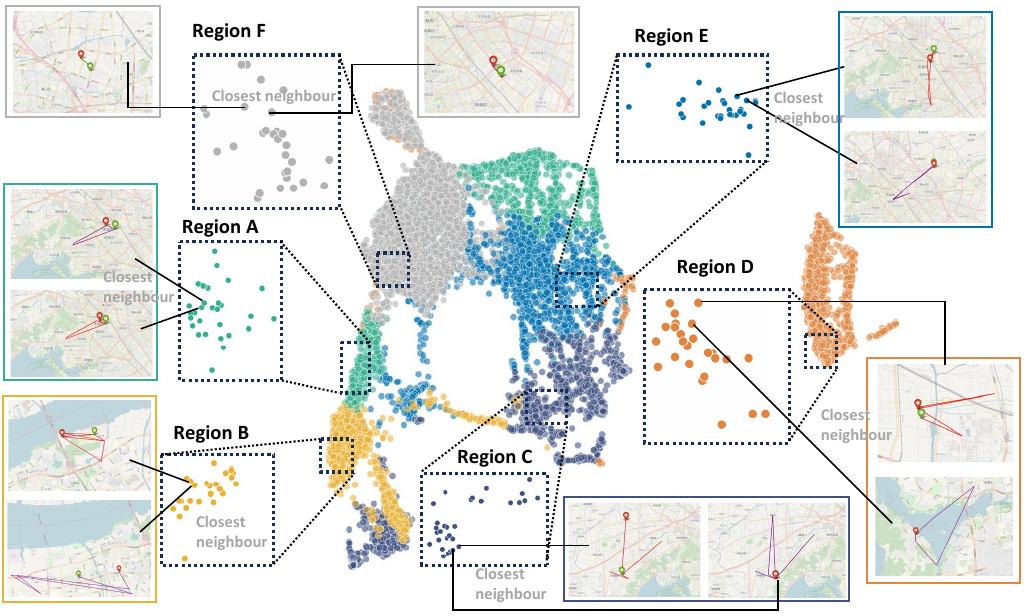}
	\caption{Trajectory data representation after dimensionality reduction by PaCMAP, randomly selected samples and neighboring samples plot trajectories to interpret human trajectory patterns captured by prototype learning. }
	\label{vis}
\end{figure*}

\subsection{Utility of Generated Data}
% 为了验证ProDiff生成的数据在实际任务中可以使用，我们以轨迹长度被切分为6(ie. k = 6)的ProDiff模型为例，给定起始点和终止点，完成轨迹插值任务，得到生成的结果，在7000个人10天内在wuxi的流量上进行测试。具体来说，生成流量图的方法是将wuxi市划分为多个1km*1km的网格(经度差距约为0.009°)，根据个人的轨迹被记录在对应的格子内，则格子的数值加一，直到所有人的轨迹被统计完。对于真实的轨迹进行流量测度并与ProDiff模型生成的流量绘制图像，结果如图x上方所示，可以看出真实的轨迹流量和生成的轨迹流量呈现非常相似的模式，为了进一步观察峰值和趋势，我们将部分区域的流量图提取边缘并投影，如图x下方所示，可以看到，ProDiff生成的轨迹流量变化趋势与真实值几乎一致，同时峰值也保持了类似的模式，这充分说明了，ProDiff在轨迹插值任务上是完全能够生成可用数据并提供给下游任务的。
To evaluate the real-world applicability of ProDiff generated data, we tested its performance on traffic flow analysis in WuXi, using $k=6$ trajectory imputations over 7000 individuals across 10 days. The city was divided into 1km $\times$ 1km grids (longitude gap $\approx 0.009^\circ$), where each grid's value increments as individuals’ trajectories pass through.  
Fig. \ref{flow}(a) (top) compares real and ProDiff-generated traffic maps, revealing highly similar spatial patterns. To further analyze peak and trend consistency, we extracted and projected traffic edges (Fig. \ref{flow}(a), bottom), showing near-identical fluctuations between real and generated data.  
Additionally, correlation coefficients and spatial distributions between real and generated data (Fig. \ref{flow}(b), \ref{flow}(c)) further confirm the reliability of ProDiff's imputation. These results demonstrate that ProDiff can generate realistic and usable trajectory data, making it applicable to downstream mobility analysis tasks.

\begin{table}[htbp]
	\centering
	\caption{Impact of different numbers of prototypes (N) and trajectory length (k).}
	\resizebox{\linewidth}{!}{
		\begin{tabular}{clrrrrr}
			\toprule
			\multicolumn{1}{l}{} & \multicolumn{1}{l}{N} & \multicolumn{1}{c}{TC@2k} & \multicolumn{1}{c}{TC@4k} & \multicolumn{1}{c}{TC@6k} & \multicolumn{1}{c}{TC@8k} & \multicolumn{1}{c}{TC@10k} \\
			\midrule
			\multirow{3}[2]{*}{{\rotatebox{90}{k=6}}} &
			15 & 0.5881  &  0.7583     &  0.8433     &   0.8928   &  0.9237  \\
			& 20 & 0.5978  &  0.7686    &  0.8518     &   0.8992   &  0.9285 \\
			% w.o. $\mathcal{L}_{J}$ &     0.3333  &   0.3333      &     0.3333    &     0.3333    &   0.3333 \\
			& 25 &   0.5868    & 0.7570      & 0.8441      &  0.8951     &  0.9265 \\
			\midrule
			\multirow{3}[2]{*}{{\rotatebox{90}{k=8}}} &
			15 & 0.5755  &  0.7552     &  0.8217     &   0.8763   &  0.8951  \\
			&20& 0.5752 & 0.7501 & 0.8236 & 0.8663 & 0.8945 \\
			% w.o. $\mathcal{L}_{J}$ &     0.3333  &   0.3333      &     0.3333    &     0.3333    &   0.3333 \\
			& 25 &  0.5785    &  0.7553   &  0.8237   & 0.8634    & 0.9017 \\
			\bottomrule
		\end{tabular}%
	}
	\label{tab:nk}%
\end{table}%

\begin{figure}[htbp]
	\centering
	\includegraphics[width=0.9\linewidth]{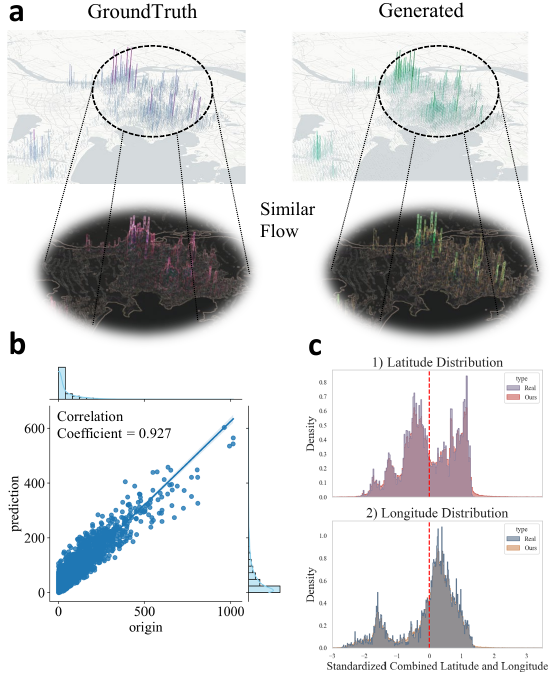}
	\caption{a. Comparison of traffic patterns between groundtruth and generated data. b. The correlation coefficient between groundtruth and generated data. c. Comparison of spatial distributions after normalization of both real and generated data. }
	\label{flow}
\end{figure}

\subsection{Hyperparameter Sensitivity.}  
We analyze the effect of prototype count ($N$), trajectory length ($k$), and diffusion steps ($d$) (Tab. \ref{tab:nk}, Appendix Tab. \ref{tab:d}). Increasing $N$ from 15 to 20 improves TC@10k to 0.9285 for $k=6$, while $k=8$ benefits from $N=25$, suggesting behavioral variations across window sizes. Diffusion steps significantly affect performance, with 300 steps yielding optimal TC@10k (0.9300). Beyond this, performance plateaus, while computational cost increases, making 300 steps a practical balance between accuracy and efficiency.

\subsection{Interpretability Analysis}
% prototype learning 学习到的关于移动模式的特征表示在低维空间是否可以解释是重要的，这直接表明了联合原型学习的diffusion框架是否能够真正学习到明确的可区分特征。图5将测试的轨迹数据输入训练好的prototype condition extractor并截取特征，将特征通过K-means聚类后取前6类，再利用PaCMAP方法降维。为了更清楚地理解降维后的含义，我们随机选取每个类的部分区域放大，并在该区域随机选取样本及其最近的邻居样本并绘制轨迹。在图5中，"Region A“捕捉到的移动模式可以描述为相似地点的起终点接近的轨迹。”Region B“可以理解为”Region A“的增强版，也就是相似地点的起终点相距稍远的轨迹，从图中黄色类与绿色类相邻也吻合这一些变化。”Region C“的移动模式大约处在相似地点，且轨迹类似。而相距较远的”Region D“与之前的模式均不相同，不在相似地点，但迁徙后可以返回起点。”Region E“ 表现出直线迁移然后返回的模式。”Region F“ 与”Region A“ 距离较近，表现出类似的模式，但邻居的轨迹并不在同一地点。模型具有捕获相似模式和区分不同模式的能力，并且这种能力在训练过程中被不断优化，这种对于移动模式的理解与diffusion框架的结合提高了在轨迹插值任务上的性能，并表现出明显的优势。
Understanding whether prototype learning captures interpretable movement patterns in low-dimensional space is essential for evaluating the effectiveness of the joint prototype learning-diffusion framework. Fig. \ref{vis} visualizes this process, where trajectory data is fed into the trained prototype condition extractor, clustered using K-means (top 6 classes), and reduced in dimensionality via PaCMAP. To interpret the latent space, we zoom into each class, plotting selected samples and their nearest neighbors.  
The learned movement patterns exhibit clear semantic coherence. Region A captures trajectories with start and end points in close proximity, reflecting movement within similar locations. Region B extends this pattern, with slightly farther start and end points, aligning with the proximity of the yellow and green clusters. Region C represents trajectories constrained within similar locations. Region D deviates from previous patterns, showing long-distance migration with a return to the starting point. Region E follows a linear migration and return pattern. Region F is similar to Region A, but its neighboring trajectories occur in different locations. These findings demonstrate the model’s ability to embed human trajectory, capture structured movement behaviors, distinguish variations, and optimize representations during training, improving trajectory imputation performance when integrated with the diffusion framework.

\section{Conclusion}
% 这篇文章首次提出了以往被忽略的细化轨迹数据和补充缺失数据的轨迹插值任务，与其他轨迹相关的任务不同，该任务聚焦于通过少量的信息来生成可用轨迹数据。为了解决该任务，我们将多个方法应用于此任务上，并通过大量实验指出生成式的diffusion相关模型具有更好的性能，同时提出了ProDiff，即联合原型学习和去噪扩散概率模型框架，在该框架中，我们设计了联合优化损失函数来驱动原型学习和扩散模型的优化，并通过精心设计的ablation study来证明原型学习提取人类轨迹模式作为条件指导diffusion模型的可行性和泛化性。令人感到振奋的是，在我们的初步实验中，通过diffusion model补全的轨迹与原始轨迹相差不大，完全满足下游任务的需求。同时我们也探索了prototype learning学习到的特征分布是体现在原始数据上的，这可能对轨迹相关任务能够提供帮助。未来我们将会探索在更多随机信息的条件下，如何能够生成个性化的轨迹数据，为下游任务提供更多可靠和隐私的数据。

This paper addresses the trajectory imputation problem, focusing on generating realistic trajectories with minimal information. Unlike conventional methods that rely on sparse trajectory pattern, we propose ProDiff, a prototype-guided diffusion model that captures macro-level mobility patterns while maintaining high fidelity in trajectory generation.
Our experiments demonstrate that ProDiff outperforms state-of-the-art approaches on two datasets, improving trajectory imputation accuracy. Ablation studies confirm that prototype learning significantly enhances trajectory representation, while diffusion modeling effectively reconstructs realistic movements.
Beyond imputation, ProDiff may be generalized to broader trajectory-related tasks, offering a scalable solution for urban mobility analysis and behavioral modeling. Moving forward, we aim to extend ProDiff to adaptive and personalized trajectory generation, integrating reinforcement learning and uncertainty-aware models to enhance reliability under dynamic and noisy conditions.

\section*{Acknowledgments}
Prof. Xin Lu's work was supported by the National Natural Science Foundation of China (72025405, 72421002, 92467302, 72474223, 72301285), the Science and Technology Innovation Program of Hunan Province (2023JJ40685, 2024RC3133), and the Major Program of Xiangjiang Laboratory (24XJJCYJ01001). Prof. Jingyuan Wang's work was partially supported by the National Natural Science Foundation of China (No. 72222022, 72171013). Suoyi Tan was supported by the National Nature Science Foundation of China (No. 72474223, 72001211 ), the science and technology innovation Program of Hunan Province (No.  2024RC3133),  and the National University of Defense Technology Cornerstone Project (No. JS24-04).
% In the unusual situation where you want a paper to appear in the
% references without citing it in the main text, use \nocite

\section*{Impact Statement}
Trajectory Imputation is essential for dealing with incomplete trajectory data, a common issue stemming from device limitations and varied collection scenarios.  Our work presents ProDiff, a prototype-guided diffusion model, to effectively impute trajectories using only minimal information. This approach allows for robust spatiotemporal reconstruction of human movement, even from highly sparse data.  While there will be important impacts resulting from improved trajectory imputation in general, here we focus on the impact of using our ProDiff framework for minimal information imputation.  There are many benefits to using our method, such as significantly improving imputation accuracy and effectively embedding and capturing macro-level human movement patterns.  This paper presents work whose goal is to advance the field of Trajectory Data Mining and Spatio-temporal Data Analysis.  There are many potential societal consequences of our work, none which we feel must be specifically highlighted here.

% \newpage
\nocite{langley00}

\bibliography{example_paper}
\bibliographystyle{icml2025}

% \flushend
%%%%%%%%%%%%%%%%%%%%%%%%%%%%%%%%%%%%%%%%%%%%%%%%%%%%%%%%%%%%%%%%%%%%%%%%%%%%%%%
%%%%%%%%%%%%%%%%%%%%%%%%%%%%%%%%%%%%%%%%%%%%%%%%%%%%%%%%%%%%%%%%%%%%%%%%%%%%%%%
% APPENDIX
%%%%%%%%%%%%%%%%%%%%%%%%%%%%%%%%%%%%%%%%%%%%%%%%%%%%%%%%%%%%%%%%%%%%%%%%%%%%%%%
%%%%%%%%%%%%%%%%%%%%%%%%%%%%%%%%%%%%%%%%%%%%%%%%%%%%%%%%%%%%%%%%%%%%%%%%%%%%%%%
\newpage
\appendix
\onecolumn

\section{Detailed Related Work} \label{sec:appendix:relw}
\textbf{Spatial-Temporal Sequence Imputation.} 
Imputation methods for spatial-temporal sequences can be broadly divided into two categories: traditional methods and deep learning methods. Traditional methods include linear interpolation \cite{blu2004linear} and mean value filling, which are fast but overly simplistic and struggle to estimate the overall data distribution \cite{huang2023robust}. More advanced probabilistic methods, such as probabilistic PCA \cite{qu2009ppca} and expectation maximization \cite{shi2013missing}, aim to capture the data distribution more accurately. Autoregressive methods like Vector Autoregressive (VAR) \cite{lutkepohl2013vector} and matrix/tensor-based methods (e.g., Tucker decomposition \cite{tan2013tensor}) have also been used to address missing data in spatial-temporal contexts.

Deep learning-based imputation methods can be further divided into non-generative and generative approaches. Non-generative methods primarily rely on RNNs and attention mechanisms. For example, GRU-D \cite{che2018recurrent} proposes a variant of the gated recurrent unit (GRU) to handle missing data in time series, while TimesNet \cite{wu2023timesnet} leverages 2D convolutional neural networks to model temporal dependencies. Attention-based methods, such as CDSA \cite{ma2019cdsa} and SAITS \cite{du2023saits}, focus on capturing both short-term and long-term dependencies across multiple dimensions (time, location, measurement). Generative methods include variational autoencoders (VAEs) \cite{doersch2016tutorial}, generative adversarial networks (GANs) \cite{goodfellow2020generative}, and diffusion probabilistic models, which have become increasingly popular. For instance, Diffusion-TS \cite{yuan2024diffusion} combines diffusion models with time series decomposition to address missing data.

\textbf{Trajectory Data Mining.}
Trajectory data mining based on deep learning methods can be categorized into several tasks, including trajectory forecasting, travel time estimation, and anomaly detection \cite{chen2024deep}. Trajectory forecasting involves predicting future locations\cite{wang2021personalized,wu2019learning} or traffic conditions\cite{wu2020learning,liu2024full,ji2023spatio,jiang2023pdformer,ji2022stden}. Common approaches include CNN-based models \cite{bao2021bilstm} and RNN-based models \cite{yang2017neural, yao2017serm}, with recent advances exploring diffusion techniques like BCDiff \cite{bcdiff} that bidirectionally refine historical and future trajectories through coupled diffusion models with adaptive gating mechanisms. Travel Time Estimation (TTE) or Estimated Time of Arrival (ETA) involves analyzing trajectory sequences to predict travel time. For example, eRCNN \cite{wang2016traffic} uses raw GPS data with a recurrent convolutional neural network to estimate travel time and speed. Road-based TTE approaches, such as WDR \cite{wang2018learning}, model the correlation between trips and roads using a regression framework.

Trajectory anomaly detection aims to identify abnormal movement patterns. Offline detection methods, like ATD-RNN \cite{song2018anomalous}, use RNNs with fully connected layers for anomaly detection. Online detection methods leverage reinforcement learning to model the transition probability between road segments, treating anomaly detection as a sequential decision problem \cite{chen2024deep}. GM-VSAE \cite{liu2020online} adapts an RNN-based VAE model to learn the probability distribution in the latent space.

Recent advancements address data incompleteness challenges through unified frameworks, GC-VRNN \cite{gc-vrnn} pioneers joint trajectory imputation and prediction using multi-space graph neural networks to capture spatio-temporal missing patterns and temporal decay modules for information recovery. Mobility generation tasks have also gained attention, with models like DiffTraj \cite{10.5555/3666122.3668965} utilizing diffusion models to generate synthetic trajectories at the population level. Our proposed task combines trajectory forecasting and mobility generation, where generative trajectory interpolation aligns with mobility generation, and generalized trajectory interpolation is considered a higher-order forecasting task.

\textbf{Mobility Data Synthesizing.}
The generation of synthetic mobility data has been extensively studied to address privacy concerns, data scarcity, and high collection costs \cite{jia2020population, zhang2023heterogeneous, zheng2014urban}. Early non-generative approaches primarily relied on statistical models \cite{simini2021deep,wang2019deep}, perturbation techniques \cite{zandbergen2014ensuring}, or simulations \cite{simini2021deep}. While these methods offer insights into movement dynamics, they often fail to capture complex spatial-temporal relationships in real-world scenarios \cite{Pappalardo_2023}.

With advancements in deep learning, generative approaches have gained prominence. Variational Autoencoders (VAEs) like TrajVAE \cite{chen2021trajvae} leverage temporal dependencies to produce realistic trajectories, while GAN-based frameworks such as TS-TrajGen \cite{jiang2023continuous} use coarse-to-fine modeling to generate synthetic trajectories from spatial grid transformations. However, these models often face limitations in achieving high-resolution fidelity, particularly when translating grid-based representations into fine-grained data.

Graph-based approaches have been widely investigated due to their ability to capture spatial-temporal relationships effectively. For example, RNTrajRec \cite{chen2023rntrajrec} employs a graph-based framework that integrates graph representations of trajectory points and spatial-temporal transformers to model dependencies along the trajectory, significantly enhancing trajectory recovery accuracy. Similarly, MM-STGED \cite{wei2024micro} utilizes a graph-based encoder-decoder structure to represent trajectories as spatial-temporal graphs, capturing both micro-level semantics of GPS points and macro-level semantics of shared travel patterns.

Attention-based architectures, such as AttnMove \cite{xia2021attnmove}, leverage attention mechanisms to model spatial-temporal correlations explicitly, facilitating the reconstruction of missing trajectory data and improving performance in downstream applications. Recent innovations in trajectory generation include the use of denoising diffusion probabilistic models (DDPMs) \cite{ho2020denoising}, which iteratively refine noisy inputs to produce high-fidelity synthetic data. For instance, DiffTraj \cite{10.5555/3666122.3668965} captures spatial-temporal dependencies without relying on intermediate transformations, offering significant advantages in privacy preservation and data utility. Additionally, ControlTraj \cite{zhu2024controltraj} extends the diffusion framework by integrating conditional signals for controllable generation, improving its applicability across varied scenarios.
\section{Detailed Denoising Network}
%%%%%%%%%%%%%%%%%%%%%%%%%%%%%%%%%%%%%%%%%%%%%%%%%%%%%%%%%%%%%%%%%%%%%%%%%%%%%%%
%%%%%%%%%%%%%%%%%%%%%%%%%%%%%%%%%%%%%%%%%%%%%%%%%%%%%%%%%%%%%%%%%%%%%%%%%%%%%%%

\subsection{Denoising Diffusion Probabilistic Model}
The diffusion probabilistic model has gained increasing attention in recent years for its success in various data generation tasks. The model consists of a \textit{forward process} that gradually perturbs the data distribution with noise, and a \textit{reverse (denoising) process} that learns to reconstrust the original data distribution.

\textbf{Forward Process.} Given a set of data samples $\boldsymbol{x}_0 \sim q(\boldsymbol{x}_0)$, the forward process adds $T$ time-steps of Gaussian noise $\mathcal{N}(\cdot)$ to it, where $T$ is an adjustable parameter. Formally, the forward process can be defined as a Markov chain from data $\boldsymbol{x}_0$ to the latent variable $\boldsymbol{x}_T$:
\begin{align}
    q(\boldsymbol{x}_{1:T} \mid \boldsymbol{x}_0) &= \prod_{t = 1}^T q(\boldsymbol{x}_{t} \mid \boldsymbol{x}_{t-1}) \\
    q(\boldsymbol{x}_{t} \mid \boldsymbol{x}_{t-1}) &= \mathcal{N}(\boldsymbol{x}_{t}; \sqrt{1- \beta_t}\boldsymbol{x}_{t-1}, \beta_t\boldsymbol{I}),
\end{align}
in which $\{\beta_t \in (0,1)\}_{t = 1}^T (\beta_1 < \beta_2 < ... < \beta_T)$ is the corresponding variance schedule. Since it is impractical to back-propagate the gradient by sampling from a Gaussian distribution, we adopt a parameterization trick to keep the gradient derivable and the $\boldsymbol{x}_t$ can be expressed as $\boldsymbol{x}_t = \sqrt{\bar{\alpha}_t} \boldsymbol{x}_0 + \sqrt{1 - \bar{\alpha}_t} \boldsymbol{\epsilon}$, where $\boldsymbol{\epsilon} \sim \mathcal{N}(0, \boldsymbol{I})$ and $\bar{\alpha}_t = \prod_{i = 1}^t(1 - \beta_i)$.

\textbf{Reverse Process.} The reverse diffusion process, also known as the denoising processing, aims to reconstruct the original data distribution from the noisy data $\boldsymbol{x}_T \sim \mathcal{N}(0, \boldsymbol{I})$. Accordingly, this process can be formulated by the following Markov chain:
\begin{align}
    p_\theta(\boldsymbol{x}_{0:T} ) &= p(\boldsymbol{x}_T)\prod_{t = 1}^T p_\theta(\boldsymbol{x}_{t-1} \mid \boldsymbol{x}_{t}) \\
    p_\theta(\boldsymbol{x}_{t-1} \mid \boldsymbol{x}_{t}) &= \mathcal{N}\left(\boldsymbol{x}_{t-1}; \boldsymbol{\mu}_\theta(\boldsymbol{x}_t, t), \boldsymbol{\sigma}_\theta(\boldsymbol{x}_t, t)^2\boldsymbol{I}\right), \label{inverse}
\end{align}
where $\boldsymbol{\mu}_\theta(\boldsymbol{x}_t, t)$ and $\boldsymbol{\sigma}_\theta(\boldsymbol{x}_t, t)$ are the mean and variance parameterized by $\theta$, respectively. Based on the literature, for any $\tilde{\beta}_t = \frac{1 - \bar{\alpha}_{t-1}}{1 - \bar{\alpha}_t}\beta_t(t>1)$ and $\tilde{\beta}_1 = \beta_1$, the parameterizations of $\boldsymbol{\mu}_\theta$ and $\boldsymbol{\sigma}_\theta$ are defined by:
\begin{align}
    \boldsymbol{\mu}_\theta(\boldsymbol{x}_t, t) = \frac{1}{\sqrt{\alpha}_t}\left(\boldsymbol{x}_t - \frac{\beta_t}{\sqrt{1 - \bar{\alpha}_t}}\boldsymbol{\epsilon}(\boldsymbol{x}_t, t)\right), \boldsymbol{\sigma}_\theta(\boldsymbol{x}_t, t) = \tilde{\beta}_t^\frac{1}{2}.
    \label{mu}
\end{align}

\newpage
\section{Method} \label{sec:appendix:method}

\let\oldnl\nl% Store \nl in \oldnl
\newcommand{\nonl}{\renewcommand{\nl}{\let\nl\oldnl}}% Remove line number for one line
\begin{algorithm}[htbp]
  \caption{Training of ProDiff}
  \raggedright

% \textbf{Training Process:}
\begin{algorithmic} [ht]
    \FOR{ $i = 1, 2, \ldots,$}
      \STATE Get \textcolor{black}{base condition $\boldsymbol{B}^{c}$}
      \STATE Get \textcolor{purple}{prototype condition $\boldsymbol{P}^{c}$} by \textcolor{purple}{PCE network}
      \STATE Get \textcolor{purple}{$WD(\boldsymbol{B}^{c}), WD(\boldsymbol{P}^{c})$} by \textcolor{purple}{Wide \& Deep network}
      \STATE Get \textcolor{purple}{conditional guidance $\mathcal{J}^{c} =WD(\boldsymbol{B}^{c}) + WD(\boldsymbol{P}^{c})$}
      \STATE $f_\gamma(\mathbf{Z}_0) = \mathcal{J}^c$
      \STATE Sample $\mathbf{Z} \sim p$ where $p$ represents the distribution of original data 
      \STATE Sample $t \sim \mathcal{U}[0, T]$, $\epsilon \sim \mathcal{N}(0, \mathbf{I}_{l \times d})$
      \STATE $\mathbf{Z}_t = \sqrt{\bar{\alpha}_t}\mathbf{Z}_0 +\sqrt{1-\bar{\alpha}_t} \boldsymbol{\epsilon}$
      \STATE Updating the gradient $\nabla_{\theta / \gamma}\mathcal{L}_{J}$ which means optimizing $\mathbb{E}_{t \sim \mathcal{U}[0, T]}\mathbb{E}_{\mathbf{Z}_0 \sim p, \mathbf{\epsilon} \sim \mathcal{N}}\left[\nabla_{\theta / \gamma}\lVert \mathbf{\epsilon} - \mathbf{\epsilon}_{\theta}(\mathbf{Z}_t, t, f_\gamma(\mathbf{Z}_0))\rVert^2\right]$
    \ENDFOR
    \label{trainalg}
\end{algorithmic}
\end{algorithm}

\begin{algorithm}[htbp]
  \caption{Sampling of ProDiff}
  \raggedright

\begin{algorithmic} [1]
    \STATE Get data and Sample $\tilde{\mathbf{Z}}_T \sim \mathcal{N}(0,\mathbf{I})$
      \STATE Get \textcolor{black}{base condition $\boldsymbol{B}^{c}$}
      \STATE Get \textcolor{purple}{prototype condition $\boldsymbol{P}^{c}$} by \textcolor{purple}{PCE network}
      \STATE Get \textcolor{purple}{$WD(\boldsymbol{B}^{c}), WD(\boldsymbol{P}^{c})$} by \textcolor{purple}{Wide \& Deep network}
      \STATE Get \textcolor{purple}{conditional guidance $\mathcal{J}^{c} =WD(\boldsymbol{B}^{c}) + WD(\boldsymbol{P}^{c})$}
      % \STATE $f_\gamma(\mathbf{Z}_0) = \mathcal{J}^c$
    \FOR{ $t = T, T-S, \ldots, 1$}
        \STATE Compute $\mu_{\theta}\left(\tilde{\mathbf{Z}}_t, t, \mathcal{J}^{c}\right)$ according to Eq.\eqref{mu}
      \STATE Compute $p_{\theta}\left(\tilde{\mathbf{Z}}_{t-1} \mid \tilde{\mathbf{Z}}_t, \mathcal{J}^{c}\right)$ according to Eq.\eqref{inverse}
    \ENDFOR \\
    \STATE \textbf{Return:} $\tilde{\mathbf{Z}}_0$

    \label{inferalg}
\end{algorithmic}

\end{algorithm}

\section{Experiment} \label{sec:appendix:experiment}

\subsection{Dataset}
We evaluate the performance of ProDiff and all baselines methods on two datasets, \textbf{WuXi} and \textbf{FourSquare}. 

The mobile phone dataset WuXi used in this study were collected between October 24, 2013, and March 24, 2014, in Wuxi, China, encompassing approximately six million users evenly distributed across the area. Every hour, these users generate around 40 million raw records, each containing essential location information, including cell-id and area-id, which correspond to specific cell towers. 
Each record in the dataset includes four key components: user ID, cell tower ID, timestamp, and a tag. The timestamp indicates the exact moment the record was created, while the tag specifies the type of activity associated with the record. 
For the purpose of this study, we focused on data from ten consecutive days, concatenating individual trajectories during this period. This subset includes 33000 active users and 671,124 location updates, of which 30,000 users are used for training and 3,000 users for testing.

The FourSquare dataset contains Foursquare check-ins over ten months (from April 12, 2012, to February 16, 2013), filtered for noise and invalid check-ins. It includes active users in two major cities, New York and Tokyo, with each check-in associated with a timestamp, GPS coordinates, and semantic meaning. We did not use the taxi-related dataset because human trajectories have a higher degree of freedom compared to car trajectories.
Due to the volume of data, only Tokyo data was used on the FourSquare dataset. Tab. \ref{tab:hm} summarizes the statistical information of these two datasets, which includes 2,293 active users with 573,703 location updates.

\begin{table}[htbp]
  \centering
  \caption{Statistics of two human mobility datasets.}
    \begin{tabular}{lcc}
    \toprule
    \multicolumn{1}{c}{Dataset} & WuXi & FourSquare\\
    \midrule
    Time Span (day)  & 111 & 310   \\
    Used Time Span (day) & 10  &  310 \\
    Train Active Users & 30000  &  1834 \\
    Test Active Users & 3000  &  459 \\
    Location Updates & 671,124  &  573,703  \\
    Average Distance (meter) & 3336.33 & 4301.51 \\
    Average Time (hour) & 7.8 & 37.15 \\
    \bottomrule
    \end{tabular}%
  \label{tab:hm}%
\end{table}%

\subsection{Preprocess}
In trajectory data analysis, careful preparation of raw data is fundamental to ensure the reliability and precision of computational models. Our preprocessing approach transforms raw GPS coordinates into a format optimized for training deep learning systems, focusing on two key steps: segmentation and normalization.

\textbf{Segmentation} involves dividing continuous trajectory data into fixed-length segments using a sliding window method. This technique incrementally generates samples from a single trajectory. Trajectories matching the target length are directly included as individual samples, while longer paths are systematically partitioned into uniform segments. This creates discrete, standardized inputs for model training.

\textbf{Normalization} adapts the data for diffusion-based models, which rely on introducing and removing Gaussian noise during training. To align with the noise distribution, spatial coordinates are scaled to a dimensionless, standardized range (e.g., $[0,1]$). This eliminates scale variations between features, allowing the model to focus on spatial patterns rather than magnitude differences. Crucially, the process is fully reversible—after model inference, outputs can be rescaled to their original geographic coordinates, preserving real-world interpretability.

\subsection{Evaluation Metric and Baseline}
The trajectory imputation task aims to fill in missing points as accurately as possible under the given point conditions. To assess performance, we propose a novel trajectory coverage metric, measuring the percentage of generated locations within a specified distance from the groundtruth. Given a threshold $\tau$, we count the number of generated points within $\tau$ distance from the groundtruth and divide by the total length of the trajectory to limit it to the $[0, 1]$ interval. For any trajectory $\mathbf{S}_i = [\mathbf{s}_{i, 1}, \mathbf{s}_{i, 2}, ... , \mathbf{s}_{i, k}]$ of length $k$, we construct a masked trajectory $\mathbf{S}_i^m = [\mathbf{s}_{i, 1},  \mathbf{s}_{i, k}]$. Given this condition and threshold $\tau$, ProDiff generates the missing points, and we calculate the trajectory coverage $TC@\tau$ as the following equation:
\begin{align}
    TC@\tau = \frac{1}{k}\sum_{j=1}^{k} \mathbb{I}(d(\hat{\mathbf{s}_{i, j}}, \mathbf{s}_{i, j}) < \tau),
\end{align}
where $\hat{\mathbf{s}_{i, j}}$ is the generated points from ProDiff, and $\mathbb{I}(\cdot)$ is an indicator function that equals 1 when $d(\hat{\mathbf{s}_{i, j}}, \mathbf{s}_{i, j})$ is less than the threshold $\tau$ and $0$ otherwise. We use the haversine function as the distance metric, which calculates the great-circle distance between two points on the Earth's surface, accurately reflecting the true distance by converting latitude and longitude into radians.

We select some traditional methods and the current state-of-the-art spatio-temporal sequence methods based on the Diffusion model, as well as trajectory-related methods to realize the trajectory imputation task. The compared baselines are set as follows:

\textbf{VAR}: 
Vector Autoregression (VAR) is a traditional model used to capture the linear interdependencies among multiple time series data. By considering each variable's own lagged values and the lagged values of other variables in the system, VAR models can effectively analyze the dynamic relationships and forecast future movements of the variables \cite{lutkepohl2005new}.

\textbf{SAITS}:
SAITS is an advanced model\cite{du2023saits} designed to handling missing data in time series analysis. By leveraging self-attention mechanisms, SAITS aims to capture both short-term and long-term dependencies within time series data, This model stands out due to its ability to focus on the most relevant parts of the input data. 

\textbf{TimesNet}: TimesNet is a progressive model \cite{wu2023timesnet} which treats time series data as 2D tensors, allowing it to leverage powerful 2D convolutional neural networks to model temporal dependencies. This approach is suitable for a wide range of applications, including missing value handling, forecasting, and anomaly detection.

\textbf{Diffusion-TS}: A current SOTA method \cite{yuan2024diffusion} for time series generation task based on diffusion model and it also applies to missing value processing tasks. Diffusion-TS decomposes time series into interpretable variables, combining seasonal trend decomposition techniques and denoising diffusion models.

\textbf{DiffTraj}: Generating GPS Trajectory with Diffusion Probabilistic Model is a SOTA model\cite{10.5555/3666122.3668965} designed for generating realistic GPS trajectories. DiffTraj progressively refines random noise into coherent and plausible GPS trajectory data through a series of probabilistic steps. It is worth noting that while DiffTraj can also be used for the trajectory imputation task, it generates trajectories at the population level, which is fundamentally different from what we have done at the individual level.

We further evaluated our model against external baselines and metrics. Specifically, we incorporated RNTrajRec\cite{chen2023rntrajrec}, TS-TrajGen\cite{jiang2023continuous}, MM-STGED\cite{wei2024micro}, and AttnMove\cite{xia2021attnmove} as baseline models:

\textbf{RNTrajRec}: RNTrajRec\cite{chen2023rntrajrec} employs a graph-based framework that integrates graph representations of trajectory points and spatial-temporal transformers to model dependencies along the trajectory, significantly enhancing trajectory recovery accuracy.

\textbf{TS-TrajGen}: TS-TrajGen\cite{jiang2023continuous} proposes a hierarchical generation framework that employs coarse-to-fine modeling to synthesize realistic trajectories. It first learns spatial grid-based latent representations to capture macroscopic movement patterns, then refines trajectories through adaptive spatial transformations and temporal interpolation. This approach effectively addresses the sparsity of raw GPS data while preserving topological consistency with road networks.

\textbf{MM-STGED}: MM-STGED\cite{wei2024micro} utilizes a graph-based encoder-decoder structure to represent trajectories as spatial-temporal graphs, capturing micro-level semantics of GPS points and macro-level semantics of shared travel patterns.

\textbf{AttnMove}: AttnMove\cite{xia2021attnmove}, leverage attention mechanisms to model spatial-temporal correlations explicitly, facilitating the reconstruction of missing trajectory data and improving performance in downstream applications.

To ensure fair comparisons, specific modules in these baselines were removed to avoid reliance on unavailable additional information from our datasets. Additionally, we introduced extra metrics to assess the spatial distribution of generated trajectories:
\begin{itemize}
    \item Density: Measures the cosine similarity of grid density between real and generated trajectories (higher is better).
    \item Distance: Evaluates the difference in travel distance between real and generated data, calculated as the sum of distances between consecutive points (lower is better).
    \item Segment Distance: Assesses the difference in segment distance between real and generated data, defined as the distance between consecutive points (lower is better).
    \item Radius: Evaluates the root mean square distance of all activity locations from the central location, indicating the spatial range (lower is better).
    \item MAE: Mean absolute error, measuring the average magnitude of errors between real and generated trajectories (lower is better).
    \item RMSE: Root mean square error, evaluating the square root of the average squared differences between predicted and actual values (lower is better).
\end{itemize}

\subsection{Exploratory Study}
% 本文还做了两组额外的实验来探索给定不同信息量条件下，ProDiff进行轨迹插值任务的不同性能。具体来说，在给定轨迹长度为10的条件下，除了固定开始点和结束点，还增加额外的信息点用来指导模型完成更好的生成，结果如表x所示前四行所示。表中“x/10”表示在轨迹长度为10的条件下，给定x个点，其中“3/10”给出的点的索引列表为[0, 4, 9], “4/10”给出的点的列表为[0, 3, 6, 9], "5/10"给出的点的索引为[0, 2, 4, 6, 9], 其余符号与表1相同，同时，受文献[1]中结论的驱动(任意给定4个点可以确定95%人员的轨迹)，我们将固定点的信息修改为随机给定x个点，即在训练和测试时均随机选定x个点，生成剩余的所有点，该情况下的难度远大于给定固定点，我们同样进行了多个实验，结果如表x后四行所示。观察整个表x，与文献[1]中的结论对比，可以发现：(i)当给定信息量从2个点增大为4个点时，结果显著变好，而后提升较小。(ii) 当固定4个点时，在可接受阈值内，我们的方法逼近了文献[1]中的结论值。在随机取点的实验中，效果相较文献[1]中的结论值差异较大，我们猜测可能原因是随机取点干扰了轨迹插值采样时的一致性，提高了模型学习难度，这也是未来需要进一步解决的问题。
To assess ProDiff's performance under varying information conditions, we conducted two additional sets of experiments for the trajectory imputation task. Specifically, given a trajectory length of 10, in addition to fixing the begin and end points, supplementary information points were added to guide the model to accomplish better generation, as shown in the first four rows of Tab. \ref{tab:info}. Meanwhile, motivated by the conclusion in the literature \cite{de2013unique} (that 95\% of the personnel's trajectories can be determined by arbitrarily giving 4 points), we modify the information of the fixed points to randomly selecting x points during both training and testing. This scenario, represented in the last four rows of Tab. \ref{tab:info}, is more challenging since the selected points vary and thus introduce more complexity. From the results of whole table, some key findings can be obtained: (i) Impact of Increased Information: The results show a significant improvement in trajectory imputation when the number of given points increases from 2 to 4. Beyond four points, the enhancement in performance becomes marginal. (ii) Fixed vs. Random Points: When fixing four points, our method's performance aligns closely with the findings of \cite{de2013unique}. However, when points are selected randomly, the model's performance diverges more noticeably from the literature's conclusion. This discrepancy likely arises because random points disrupt the consistency of trajectory sampling, increasing the difficulty of model learning. The results highlight the potential of ProDiff in accurately imputing missing trajectory points with a sufficient number of fixed reference points. However, the challenge remains when dealing with randomly selected points, indicating an area for future improvement.

\begin{table}[htbp]
  \centering
  \caption{Impact of different diffusion steps (d).}
    % \resizebox{\linewidth}{!}{
    \begin{tabular}{lrrrrr}
    \toprule
    \multicolumn{1}{c}{d} & \multicolumn{1}{c}{TC@2k} & \multicolumn{1}{c}{TC@4k} & \multicolumn{1}{c}{TC@6k} & \multicolumn{1}{c}{TC@8k} & \multicolumn{1}{c}{TC@10k} \\
    \midrule
    100  & 0.5750      &  0.7445     & 0.8329      &  0.8853     &  0.9187 \\
    300 & 0.6015    &  0.7697     &  0.8524     & 0.9005      & 0.9300  \\
    500  & 0.5978   & 0.7686   &  0.8515  &  0.8992  & 0.9285  \\
    700 & 0.5881   &  0.7551     & 0.8399      & 0.8897      & 0.9216 \\
    \bottomrule
    \end{tabular}%
    % }
  \label{tab:d}%
\end{table}%

\begin{table}[htbp]
  \centering
  \caption{Comparison of the performance of fixed and randomized trajectory points with different amount of information. "3/10" means that given a trajectory of length 10, three points are provided at indices [0, 4, 9]. Similarly, "4/10" provides points at indices [0, 3, 6, 9], and "5/10" at indices [0, 2, 4, 6, 9]. $\dagger$ represents the random selecting experiments.}
  % \resizebox{\linewidth}{!}{
    \begin{tabular}{lrrrrr}
    \toprule
    \multicolumn{1}{c}{Info} & \multicolumn{1}{c}{TC@2k} & \multicolumn{1}{c}{TC@4k} & \multicolumn{1}{c}{TC@6k} & \multicolumn{1}{c}{TC@8k} & \multicolumn{1}{c}{TC@10k} \\
    \midrule
    2/10  & 0.4996      &  0.6994     & 0.8048      &  0.8667     &  0.9053 \\
    3/10 &  0.5865     &  0.7638     &  0.8498     & 0.8990      & 0.9292 \\
    4/10  & 0.6820     & 0.8305      & 0.8979      & 0.9347      & 0.9561  \\
    5/10 &  0.7362     &  0.8637     & 0.9179      & 0.9466      &  0.9633 \\
    \midrule
    $\text{2/10}^\dagger$  & 0.4143 &  0.6282  &  0.7402  &  0.7996   & 0.8351    \\
    $\text{3/10}^\dagger$ & 0.5364    & 0.7098  & 0.7897  &  0.8363   & 0.8663 \\
    $\text{4/10}^\dagger$  & 0.6051      &       0.7501 &  0.8269     &  0.8738    & 0.9047  \\
    $\text{5/10}^\dagger$ & 0.6817    & 0.8062      & 0.8706      & 0.9089      & 0.9336 \\
    \bottomrule
    \end{tabular}%
    % }
  \label{tab:info}%
\end{table}%

\subsection{Ablation Study}
Diffusion models typically involve higher computational costs, which may potentially limit their application in large-scale trajectory data scenarios. To tackle the efficiency issue, we have developed two variants that integrate DDIM sampling (ProDDIM) \cite{song2020denoising} and LA (ProDDIM+Linear Attention) \cite{katharopoulos2020transformers}, respectively.

We carried out experiments on the WuXi dataset with k=8. As presented in Tab. \ref{exp: acc}, both variants achieve at least approximately ~10× speed-up compared to ProDDPM, while only experiencing minor performance reductions. These findings indicate that our proposed model serves as a general framework. It can be effectively combined with various acceleration techniques, thereby facilitating its deployment in real-world applications and addressing the computational efficiency concerns associated with diffusion models in the context of large-scale trajectory data.

\begin{table*}[ht]
	\centering
	\caption{Accelerated verisons of the ProDiff model (Thpt: Throughput; PPT: Processing Per Time-unit)}
	\renewcommand{\arraystretch}{1.5}
	%\resizebox{\textwidth}{!}{% 关键缩放命令
		%\small % 缩小字号
		\begin{tabular}{lccccccc}
			\hline
			\multicolumn{1}{l}{Method} & \multicolumn{1}{c}{TC@2k} & \multicolumn{1}{c}{TC@4k} & \multicolumn{1}{c}{TC@6k} & \multicolumn{1}{c}{TC@8k} & \multicolumn{1}{c}{TC@10k} & \multicolumn{1}{c}{Thpt(s/sample)} & \multicolumn{1}{c}{PPT(sample/s)} \\ \hline
			ProDDPM (Ours)                      & 0.5752                             & 0.7501                             & 0.8236                             & 0.8663                             & 0.8945                              & 77.9346                                     & 0.0128                                     \\
			ProDDIM(Ours)                       & 0.5430                             & 0.7131                             & 0.7773                             & 0.8303                             & 0.8741                              & 788.6852                                    & 0.0013                                     \\
			ProDDIM+LA(Ours)                    & 0.5350                             & 0.7197                             & 0.7725                             & 0.836                              & 0.8785                              & 768.5845                                    & 0.0013                                     \\ \hline
		\end{tabular}
	%}
    \label{exp: acc}
\end{table*}

\end{document}